\documentclass{article}

\usepackage[main,final]{neurips_2026}

\usepackage[utf8]{inputenc}
\usepackage[T1]{fontenc}
\usepackage{hyperref}
\hypersetup{hidelinks,
  pdftitle={The Reflexivity Threshold: A Phase Transition for Multi-Agent Performative Prediction},
  pdfauthor={Ahmed Mahrous and Roberto Di Pietro}}
\usepackage{url}
\usepackage{booktabs}
\usepackage{amsfonts}
\usepackage{amssymb}
\usepackage{amsmath}
\usepackage{amsthm}
\usepackage{nicefrac}
\usepackage{microtype}
\usepackage{xcolor}
\usepackage{mathtools}
\usepackage{subcaption}
\usepackage{enumitem}
\setlist{nosep,leftmargin=*}

\usepackage[font=small,labelfont=bf]{caption}
\makeatletter
\renewcommand\paragraph{\@startsection{paragraph}{4}{\z@}%
  {0.25ex plus 0.1ex minus 0.1ex}%
  {-0.8em}%
  {\normalfont\normalsize\bfseries}}
\makeatother

\makeatletter
\def\thm@space@setup{%
  \thm@preskip=4pt
  \thm@postskip=4pt
}
\makeatother

\newtheorem{theorem}{Theorem}
\newtheorem{proposition}[theorem]{Proposition}
\newtheorem{corollary}[theorem]{Corollary}

\newtheorem{definition}[theorem]{Definition}
\newtheorem{assumption}{Assumption}

\DeclareMathOperator*{\argmin}{arg\,min}

\title{The Reflexivity Threshold: A Phase Transition for Multi-Agent Performative Prediction}

\author{%
  Ahmed Mahrous\thanks{Corresponding author.}\qquad Roberto Di Pietro \\
  King Abdullah University of Science and Technology (KAUST) \\
  Saudi Arabia
}

\begin{document}

\maketitle

\begin{abstract}

Many learning systems are performative: once deployed, their predictions change the data distribution on which future models are trained. In multi-agent settings, this feedback is routed through a network of interacting learners, so stability depends not only on the strength of performativity but also on who affects whom. We study repeated retraining in multi-agent performative prediction and identify a spectral threshold that separates stable learning from worst-case learning obstruction. Prior work often collapses cross-agent feedback into a scalar contraction constant, missing this network structure. We instead introduce the reflexivity matrix $\Gamma$, which records how strongly one agent's deployment can move another agent's best response, defined from per-agent gradient sensitivities and curvature parameters. The threshold occurs when the spectral radius of this matrix crosses one. Below the threshold, repeated retraining converges geometrically to a unique performatively stable equilibrium, with cumulative excess loss bounded uniformly in the time horizon. Above the threshold, we construct noisy supercritical instances in which every local-information learner incurs expected cumulative excess loss growing linearly for horizons up to order $1/(\rho(\Gamma)-1)$. Together, these results characterize the threshold behavior through upper and lower bounds and recover the single-agent scalar threshold from prior work as a special case.

\end{abstract}

\section{Introduction}
\label{sec:intro}



Many learning systems, once deployed, change the environment that later generates their training data. A credit scoring model can change how applicants behave; an algorithmic trading strategy can move the price data on which future strategies are trained. This feedback effect is the focus of performative prediction. Existing theory largely studies the case of one learner facing the consequences of its own deployment. Many real systems, however, contain multiple learners whose deployments \emph{jointly} shape the environment.

The multi-agent setting is not simply a larger version of the single-agent problem; it introduces different difficulties. Feedback can travel across agents: one agent's deployment changes the data distribution, which changes another agent's retraining problem, which may itself affect the data distribution or other agents. Stability therefore depends not only on the overall strength of performativity, but also on how feedback is routed through the agent network.

We ask whether there is a general threshold separating stable multi-agent performative learning from unavoidable learning failure. When does repeated retraining converge to a stable equilibrium? 

We answer this question through a reflexivity matrix $\Gamma$ with entries $\Gamma_{ij}:=a_{ij}/\mu_i$, measuring how strongly agent $j$'s deployment can move agent $i$'s best response, after normalizing by the curvature of agent $i$'s objective. Thus $\Gamma$ records who affects whom, and how strongly, in the performative feedback network.

Our main result is a phase transition at $\rho(\Gamma)=1$, where $\rho(\Gamma)$ is the spectral radius of $\Gamma$. This quantity captures the largest asymptotic amplification factor of the feedback network. If $\rho(\Gamma)<1$, cross-agent feedback dissipates: repeated retraining is a contraction in a Perron--Frobenius weighted norm, converges geometrically to a unique performatively stable equilibrium, and incurs bounded cumulative excess loss. Above the threshold, this positive guarantee cannot be extended. We construct supercritical systems in which every decentralized local-information learner suffers cumulative loss that grows linearly with time, regardless of the retraining rule. Under sensitivity and curvature assumptions, the spectral radius of the reflexivity matrix gives a sharp boundary for general stability guarantees: below one, all systems satisfying our assumptions contract, while above one, there exist supercritical systems for which decentralized learning incurs linear cumulative loss.

\paragraph{Contributions.} 
First, we introduce the reflexivity matrix $\Gamma$, a nonnegative matrix whose entry $\Gamma_{ij}$ records how strongly agent $j$'s deployment can move agent $i$'s best response, as determined by per-agent gradient sensitivities and curvature constants. We prove an entrywise best-response sensitivity bound that reduces nonlinear retraining dynamics with general strongly convex losses to a spectral comparison through $\Gamma$, making Perron--Frobenius theory available. Second, we prove the subcritical side of the threshold: if $\rho(\Gamma)<1$, repeated retraining is a contraction in a Perron--Frobenius weighted norm, yielding a unique performatively stable equilibrium, geometric convergence from any initialization, and cumulative dynamic excess loss bounded uniformly in the time horizon. Third, we prove the supercritical side: for a rank-one family with $\rho(\Gamma)=1+\varepsilon$, every decentralized local-information learner incurs cumulative excess loss $\Omega(\Delta^2 T)$ over horizons $T=O(1/\varepsilon)$. Together, these results identify $\rho(\Gamma)=1$ as a boundary for general stability guarantees under primitive gradient-sensitivity and curvature assumptions, and recover the single-agent scalar threshold from prior work when $k=1$.


\section{Related Work}
\label{sec:related}

This paper draws on three lines of research. The first is the single-agent theory of performative prediction, which introduces the framework we extend. The second is the multi-agent theory of performative prediction and decision-dependent games, which is where our contribution is most directly positioned. The third is the combined tradition of spectral methods for network games and statistical lower bound techniques, from which we import two standard technical tools. 

\paragraph{Performative prediction.}
\citet{perdomo2020performative} introduced performative prediction in a single-agent setting, formalizing performative stability and convergence of repeated retraining under sensitivity conditions. Subsequent work largely retained this single-agent viewpoint while extending the framework to stochastic, causal, nonconvex, and sequential settings \citep{mendlerdunner2020stochastic,miller2021outside,brown2022performative,hardt2022performative,mofakhami2023performative,mandal2023performative}. Closest to our threshold result, \citet{anagnostides2026computational} established hardness and lower bounds at the single-agent scalar threshold. We depart from this line by studying decentralized multi-agent retraining, where performative feedback is cross-agent and the scalar threshold is replaced by the spectral condition $\rho(\Gamma)=1$.

\paragraph{Multi-agent performativity.} The multi-agent extension is the most direct predecessor of our work, and the literature divides into three strands.

The first establishes scalar contraction conditions for convergence. \citet{narang2023multiplayer} formalized $n$-player performative prediction with a scalar aggregate threshold $\rho < 1$, and analogous scalar conditions appear in cooperative consensus learning \citep{li2022multiagent}, decentralized games with coupled constraints \citep{yan2024decentralized}, and a Wasserstein-free variant \citep{zhong2025towards}. We depart from this strand on two axes. We replace the scalar aggregate with a matrix $\Gamma$ that preserves per-agent-pair feedback, and we pair the upper bound with a worst-case information-theoretic lower bound above the threshold.

The second develops spectral characterizations in special models. \citet{wang2023network} derived a necessary-and-sufficient spectral condition for a linear-quadratic mean-squared-error model on multiplex networks. Their spectral object is a weighted adjacency matrix with exogenous structure, and the condition is specific to the linear-quadratic model. Our $\Gamma$ is built from primitive per-agent quantities, applies to general strongly convex losses, and is paired with a lower bound.

The third identifies dynamical phase transitions. \citet{piliouras2023multi} proved a transition from stability to Li--Yorke chaos as update aggressiveness grows under scalar aggregate influence. Their threshold is dynamical and concerns the long-run behavior of specific update rules; ours is information-theoretic and applies to every decentralized local-information learner. \citet{farina2026stability} show that, for bounded losses, no-regret online learning yields approximate performative stability of a mixture over joint deployment histories, without regularity assumptions on the distribution maps. Our subcritical result guarantees geometric convergence of the last retraining iterate to a unique performatively stable equilibrium under curvature and sensitivity assumptions.

\paragraph{Spectral methods and lower bounds.} Two technical traditions supply tools used in our analysis without serving as competing frameworks. Spectral conditions for equilibrium uniqueness and stability are well established in network games \citep{ballester2006networks,bramoulle2014strategic,parise2019variational}, where the spectral radius governs payoff coupling: agent $j$'s action enters agent $i$'s utility directly. In our setting, it governs distributional feedback through $D(w)$, and $\Gamma$ is built from gradient sensitivities and curvature constants rather than payoff coefficients. Our lower bound uses the two-point method \citep{tsybakov2009introduction}; recent extensions to interactive \citep{chen2024assouad} and decision-dependent \citep{cutler2024stochastic} settings do not cover our performative learning setting. Our construction is tailored to multi-agent performative feedback over a finite horizon.

\section{Model}
\label{sec:model}

This section specifies the multi-agent performative prediction model used throughout the paper. We describe the agents, their objectives, the dynamics of repeated retraining, and the assumptions that underlie the reduction of Section~\ref{sec:reduction}. 

There are $k$ agents. Each agent $i \in \{1, \dots, k\}$ chooses a deployment $w_i$ from a feasible set $\mathcal{W}_i = \mathbb{R}^{d_i}$. The joint deployment profile is $w := (w_1, \dots, w_k) \in \mathcal{W} := \mathcal{W}_1 \times \cdots \times \mathcal{W}_k$.
The deployed profile induces a shared endogenous distribution $D(w)$ over observations $z \in \mathcal{Z}$. This distribution represents the data-generating process that all agents face after the profile $w$ has been deployed. The dependence of $D$ on $w$ captures the performative nature of the system: the environment that agents observe is shaped by their own collective actions.

Each agent $i$ has a loss function $\ell_i : \mathcal{W}_i \times \mathcal{Z} \to \mathbb{R}$ and aims to minimize its performative risk
\begin{equation}
\label{eq:perf-risk-def}
F_i(u_i; w)
\;:=\;
\mathbb{E}_{z \sim D(w)} \big[ \ell_i(u_i; z) \big].
\end{equation}
The two arguments of $F_i$ play distinct roles. The first, $u_i$, is the candidate action optimized by agent $i$. The second, $w$, is the deployed profile that determines the environment through $D(w)$. Retraining couples these roles by replacing the deployed profile with agents' optimized actions.

For each agent $i$ and each deployed profile $w$, the best response is
\begin{equation}
\label{eq:best-response-def}
B_i(w)
\;:=\;
\argmin_{u_i \in \mathcal{W}_i} F_i(u_i; w).
\end{equation}
Assumption~\ref{ass:sc} below ensures that this minimizer exists and is unique. The joint best-response map is $B(w) := \big( B_1(w), \dots, B_k(w) \big)$.
An equilibrium of the system is a profile that is consistent with every agent's best response.

\begin{definition}[Performatively stable equilibrium]
\label{def:pse}
A profile $w^\star \in \mathcal{W}$ is a \emph{performatively stable equilibrium} if $w^\star = B(w^\star)$.
\end{definition}

At a performatively stable equilibrium, each agent's deployed action is optimal for the distribution induced by the joint deployment. Existence of such an equilibrium is not assumed; it is derived in Section~\ref{sec:subcritical} as a consequence of the spectral condition on the reflexivity matrix.

We study the repeated retraining dynamics in which every agent simultaneously replaces its current deployment with its best response,
\begin{equation}
\label{eq:dynamics}
w_{t+1} = B(w_t), \qquad t = 0, 1, 2, \dots,
\end{equation}
with initialization $w_0 \in \mathcal{W}$. The central question of the paper is when the sequence $\{w_t\}$ converges to a performatively stable equilibrium, and when convergence fails.

\paragraph{Main assumptions.}
Three assumptions are used throughout the paper. The first two are local to individual agents and enter the reduction of Section~\ref{sec:reduction}. The third is a mild regularity condition on the feasible sets that allows application of Banach's fixed-point theorem.

\begin{assumption}[Strong convexity in own action]
\label{ass:sc}
For each agent $i$ and every fixed deployed profile $w \in \mathcal{W}$, the map $u_i \mapsto F_i(u_i; w)$ is $\mu_i$-strongly convex on $\mathcal{W}_i$, with $\mu_i > 0$.
\end{assumption}

Strong convexity makes $B_i(w)$ unique and quantifies agent $i$'s resistance to gradient perturbations. Larger $\mu_i$ means that the best response is less sensitive to changes in the data distribution.

\begin{assumption}[Gradient sensitivity to the deployed profile]
\label{ass:gs}
For each $i$, $F_i$ is continuously differentiable in $u_i$, and there exist nonnegative constants $a_{ij} \ge 0$ for all $i, j \in \{1, \dots, k\}$ such that for every agent $i$, every candidate action $u_i \in \mathcal{W}_i$, and every pair of deployed profiles $w, w' \in \mathcal{W}$,
\[
\| \nabla_{u_i} F_i(u_i; w) - \nabla_{u_i} F_i(u_i; w') \|
\;\le\;
\sum_{j=1}^k a_{ij} \, \| w_j - w_j' \|.
\]
\end{assumption}

Gradient sensitivity quantifies how a change in the deployed profile affects each agent's first-order condition. The coefficient $a_{ij}$ measures how much agent $j$'s deployment disturbs agent $i$'s gradient. If agent $j$ has no effect on agent $i$'s optimization problem, then $a_{ij} = 0$. Larger values of $a_{ij}$ indicate stronger cross-agent coupling. The diagonal coefficient $a_{ii}$ captures agent $i$'s own self-reflexive effect through the environment.

Assumptions~\ref{ass:sc} and~\ref{ass:gs} are the only two channels through which agents interact mathematically in this paper. Every downstream result is derived from these two assumptions and their combination in Section~\ref{sec:reduction}. The reflexivity matrix $\Gamma$ of Section~\ref{sec:reduction} is built from exactly the parameters $\{\mu_i\}$ and $\{a_{ij}\}$ introduced here.

\begin{assumption}[Unconstrained domains]
\label{ass:closed}
Each feasible set is $\mathcal{W}_i = \mathbb{R}^{d_i}$.
\end{assumption}

This makes $\mathcal{W}$ complete under the norms used below, enabling Banach's fixed-point theorem, and ensures that retraining stays in $\mathcal{W}$.

Throughout, $x_i$ denotes agent $i$'s block of a vector
$x\in\mathbb{R}^{d_1}\times\cdots\times\mathbb{R}^{d_k}$, with Euclidean norm
$\|\cdot\|$. The spectral radius of $M$ is $\rho(M)$, and for
$q\in\mathbb{R}_{++}^k$ we write
$\|x\|_q:=\max_i \|x_i\|/q_i$.

\section{The Reduction: From Retraining Dynamics to \texorpdfstring{$\Gamma$}{Gamma}}
\label{sec:reduction}

The retraining dynamics $w_{t+1}=B(w_t)$ are nonlinear and couple all $k$ agents through the shared endogenous distribution $D(w)$. Direct analysis of these dynamics is difficult without additional structure. This section introduces the reduction used throughout the paper: under the assumptions of Section~\ref{sec:model}, changes in the nonlinear map $B$ are bounded entrywise by a nonnegative matrix $\Gamma$. The spectral properties of $\Gamma$ can therefore be used to bound the dynamics of $B$.

The first step bounds the shift in any agent's best response induced by a shift in the deployed profile. The bound is linear and decomposes cleanly across agents.

\begin{theorem}[Best-response sensitivity]
\label{thm:br-sensitivity}
Under Assumptions~\ref{ass:sc}--\ref{ass:gs} and~\ref{ass:closed}, for every agent $i$ and every pair of deployed profiles $w, w'$,
\[
\|B_i(w) - B_i(w')\|
\;\le\;
\sum_{j=1}^k \frac{a_{ij}}{\mu_i}\,\|w_j - w_j'\|.
\]
\end{theorem}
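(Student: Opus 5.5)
The plan is the standard strong-monotonicity argument for minimizers of strongly convex functions, applied agent by agent. Fix $i$ and profiles $w,w'$, and set $u:=B_i(w)$, $u':=B_i(w')$. These are well defined by Assumption~\ref{ass:sc}: a $\mu_i$-strongly convex, continuously differentiable function on all of $\mathbb{R}^{d_i}$ (Assumption~\ref{ass:closed}) has a unique minimizer. Because the domain is unconstrained, that minimizer is characterized by the first-order condition. So the first step records $\nabla_{u_i}F_i(u;w)=0$ and $\nabla_{u_i}F_i(u';w')=0$. No variational inequality or projection is needed here, and this is exactly where Assumption~\ref{ass:closed} enters.

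The second step uses strong monotonicity of the gradient of the single function $u_i\mapsto F_i(u_i;w')$. For a differentiable $\mu_i$-strongly convex function, adding the two strong-convexity inequalities at $u$ and $u'$ gives
\[
\bigl\langle \nabla_{u_i}F_i(u;w')-\nabla_{u_i}F_i(u';w'),\,u-u'\bigr\rangle \;\ge\; \mu_i\|u-u'\|^2 .
\]
Since $\nabla_{u_i}F_i(u';w')=0$, the left side reduces to $\langle \nabla_{u_i}F_i(u;w'),u-u'\rangle$. We may also subtract $\nabla_{u_i}F_i(u;w)=0$ inside the inner product, which rewrites it as $\langle \nabla_{u_i}F_i(u;w')-\nabla_{u_i}F_i(u;w),u-u'\rangle$.

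The third step bounds this by Cauchy--Schwarz and then applies Assumption~\ref{ass:gs} at the fixed candidate action $u_i=u$:
\[
\mu_i\|u-u'\|^2 \;\le\; \|u-u'\|\sum_{j=1}^k a_{ij}\|w_j-w'_j\| .
\]
If $u=u'$ the claim is trivial. Otherwise we divide by $\mu_i\|u-u'\|>0$ and obtain the stated bound with coefficients $a_{ij}/\mu_i$.

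The only point that needs care is the key trick of evaluating both gradients at the same action $u$, so that Assumption~\ref{ass:gs} (uniform in $u_i$) applies. The strong monotonicity is used for the fixed environment $w'$, so the two effects separate cleanly. I do not expect a real obstacle. The one subtlety is justifying the gradient-monotonicity inequality from strong convexity when only $C^1$ regularity is assumed. This follows from the first-order characterization $F(v)\ge F(u)+\langle\nabla F(u),v-u\rangle+\tfrac{\mu_i}{2}\|v-u\|^2$, applied symmetrically at $u$ and $u'$ and summed, and that characterization holds for differentiable strongly convex functions.
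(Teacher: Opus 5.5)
Your proof is correct and follows the paper's argument: first-order optimality at both best responses, strong monotonicity of the gradient in one fixed environment, and Assumption~\ref{ass:gs} applied to a gradient difference at a single fixed action. The differences are cosmetic: you work in environment $w'$ with fixed action $B_i(w)$ where the paper uses $w$ and $B_i(w')$, and you derive $\mu_i\|u-u'\|\le\|\nabla_{u_i}F_i(u;w')-\nabla_{u_i}F_i(u';w')\|$ explicitly from the inner-product form and Cauchy--Schwarz, whereas the paper quotes it directly.
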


\begin{proof}
Fix an agent $i$ and let $u_i := B_i(w)$ and $u_i' := B_i(w')$. First-order optimality gives $\nabla_{u_i} F_i(u_i; w) = 0$ and $\nabla_{u_i} F_i(u_i'; w') = 0$. Strong convexity of $F_i(\cdot; w)$ (Assumption~\ref{ass:sc}) yields $\mu_i\|u_i - u_i'\| \le \|\nabla_{u_i} F_i(u_i; w) - \nabla_{u_i} F_i(u_i'; w)\|$. Substituting the first optimality condition and then the second gives $\mu_i\|u_i - u_i'\| \le \|\nabla_{u_i} F_i(u_i'; w)\| = \|\nabla_{u_i} F_i(u_i'; w) - \nabla_{u_i} F_i(u_i'; w')\|$. The right-hand side is a gradient difference at a fixed action $u_i'$ across two deployed profiles, which Assumption~\ref{ass:gs} bounds by $\sum_j a_{ij}\,\|w_j - w_j'\|$. Dividing by $\mu_i > 0$ yields the claim.
\end{proof}

The proof uses each structural assumption once: gradient sensitivity bounds the performative change in agent $i$'s gradient, and strong convexity converts it into a best-response change. Their interaction appears only through $a_{ij}/\mu_i$, the ratio of agent $j$'s disturbance to agent $i$'s curvature, which measures how much $j$ can shift $i$'s best response.

The weights in Theorem~\ref{thm:br-sensitivity} define the central object of the paper.

\begin{definition}[Reflexivity matrix]
\label{def:gamma}
The \emph{reflexivity matrix} is the $k \times k$ nonnegative matrix $\Gamma$ with entries $\Gamma_{ij} := a_{ij}/\mu_i$, for $i, j = 1, \dots, k$.
\end{definition}

In terms of $\Gamma$, Theorem~\ref{thm:br-sensitivity} states that for every $i$ and every $w, w'$,
\begin{equation}
\label{eq:br-Gamma}
\|B_i(w) - B_i(w')\|
\;\le\;
\sum_{j=1}^k \Gamma_{ij}\,\|w_j - w_j'\|.
\end{equation}
The vector of best-response shifts is therefore dominated entrywise by the product of $\Gamma$ with the vector of action shifts.

The matrix $\Gamma$ is the central reduction object. It is built from primitive per-agent quantities rather than imposed as an aggregate Lipschitz constant, and its nonnegativity makes Perron--Frobenius theory available for the weighted-norm contraction argument below. The rank-one construction in Section~\ref{sec:lower-bound} saturates~\eqref{eq:br-Gamma} along the Perron direction, so the same spectral radius governs both the upper and lower sides of the threshold.

For a direct comparison with \citet{narang2023multiplayer}, consider scalar agents with quadratic losses $\ell_i(u_i;z_i)=\tfrac12(u_i-z_i)^2$ and deterministic observations $z=Cw$, where $C\ge0$. Here $\mu_i=1$, the data-gradient Lipschitz constants are $\beta_i=1$, and the full-profile Wasserstein sensitivities are $\gamma_i=\|C_{i:}\|_2$. Their scalar aggregate condition becomes $\sqrt{\sum_i\gamma_i^2}=\|C\|_F<1$, whereas ours is $\rho(C)<1$, since $\Gamma=C$. Because $\rho(C)\le\|C\|_F$, our certificate is never more restrictive in this common subclass and can be strictly stronger. For example, if $C=cP$ with $P$ a cyclic permutation matrix, then $\rho(C)=c$ but $\|C\|_F=c\sqrt{k}$. Thus, for $k^{-1/2}<c<1$, the matrix condition certifies convergence while the scalar aggregate condition does not.

\section{Subcritical Regime: Contraction and Bounded Excess Loss}
\label{sec:subcritical}

This section proves the positive side of the threshold. Under the assumptions of Section~\ref{sec:model}, if $\rho(\Gamma)<1$, then the retraining map is a contraction in a Perron--Frobenius weighted norm. This yields a unique performatively stable equilibrium, geometric convergence from any initialization, and a cumulative excess-loss bound that does not grow with the time horizon. 

\paragraph{Contraction via Perron--Frobenius.}
For two retraining trajectories $w_t$ and $w'_t$, let $d_{i,t}:=\|w_{i,t}-w'_{i,t}\|$. Inequality~\eqref{eq:br-Gamma} gives $d_{t+1}\le\Gamma d_t$ entrywise, hence $d_t\le\Gamma^t d_0$. Since $\rho(\Gamma)<1$, the matrix powers vanish, directly explaining why deviations decay. Perron--Frobenius theory supplies positive weights $q$ with $\Gamma q\le\alpha q$ for some $\alpha<1$, converting this matrix comparison into a one-step contraction in $\|\cdot\|_q$ and allowing Banach's theorem to establish existence and uniqueness of the equilibrium.

\begin{theorem}[Weighted contraction below threshold]
\label{thm:contraction}
Suppose Assumptions~\ref{ass:sc}--\ref{ass:gs} and~\ref{ass:closed} hold, and $\rho(\Gamma) < 1$. Then there exist a positive vector $q \in \mathbb{R}^k_{++}$ and a constant $\alpha \in (0, 1)$ such that
\[
\|B(w) - B(w')\|_q
\;\le\;
\alpha \, \|w - w'\|_q
\qquad \text{for all } w, w' \in \mathcal{W},
\]
where the weighted norm is defined by
\[
\|x\|_q \;:=\; \max_{1 \le i \le k} \frac{\|x_i\|}{q_i}.
\]
\end{theorem}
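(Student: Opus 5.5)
The plan is to combine the entrywise comparison \eqref{eq:br-Gamma} from Theorem~\ref{thm:br-sensitivity} with a strictly positive vector $q$ satisfying $\Gamma q \le \alpha q$ entrywise for some $\alpha\in(0,1)$. Given such a $q$, the contraction is a one-line computation. For any $w,w'$ and any agent $i$, we have $\|w_j-w_j'\|\le q_j\|w-w'\|_q$ by definition of the weighted norm, so
\[
\|B_i(w)-B_i(w')\| \le \sum_{j}\Gamma_{ij}\|w_j-w_j'\| \le \Big(\sum_j \Gamma_{ij} q_j\Big)\|w-w'\|_q \le \alpha q_i \|w-w'\|_q .
\]
Dividing by $q_i>0$ and taking the maximum over $i$ gives $\|B(w)-B(w')\|_q\le\alpha\|w-w'\|_q$.

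The main obstacle is producing a \emph{strictly positive} $q$. The nonnegative matrix $\Gamma$ may be reducible, for instance with zero rows or a block-triangular structure. In that case the Perron vector of $\Gamma$ can have zero entries, and we cannot divide by it. I would avoid this with a perturbation argument. Fix $\alpha$ with $\rho(\Gamma)<\alpha<1$, for example $\alpha=(1+\rho(\Gamma))/2$. Let $J$ be the all-ones matrix. By continuity of the spectral radius, choose $\delta>0$ small enough that $\rho(\Gamma+\delta J)\le\alpha$.

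The matrix $\Gamma+\delta J$ is entrywise positive. Perron's theorem therefore gives an eigenvector $q\in\mathbb{R}^k_{++}$ with $(\Gamma+\delta J)q=\rho(\Gamma+\delta J)\,q$. Since $\delta Jq\ge0$, it follows that $\Gamma q\le(\Gamma+\delta J)q\le\alpha q$ entrywise, which is exactly what the computation above needs.

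An alternative route avoids eigenvectors entirely and uses a Neumann series. Set $q:=\sum_{t\ge0}(\Gamma/\alpha)^t\mathbf{1}$, which converges because $\rho(\Gamma/\alpha)<1$. This $q$ satisfies $q\ge\mathbf{1}>0$ and $(\Gamma/\alpha) q = q-\mathbf{1}\le q$, so again $\Gamma q\le \alpha q$. I would present this version, since it is self-contained.

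Finally, I would note that $\alpha\in(0,1)$ holds by construction even when $\rho(\Gamma)=0$, since then $\alpha=1/2$. I would also note that $\|\cdot\|_q$ is a norm equivalent to the Euclidean norm on $\mathcal{W}$, so completeness under Assumption~\ref{ass:closed} is preserved for the subsequent Banach fixed-point step.
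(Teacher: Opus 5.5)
Your argument is correct, and its skeleton is exactly the paper's: the entrywise comparison \eqref{eq:br-Gamma}, a strictly positive $q$ with $\Gamma q\le\alpha q$, then division by $q_i$ and a maximum over $i$.

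Your first construction of $q$ is also essentially the paper's: you perturb to $\Gamma+\delta J$ and take the Perron vector of a positive matrix. The differences are minor. The paper invokes Perron--Frobenius directly when $\Gamma$ is irreducible and perturbs only in the reducible case. It also sets $\alpha:=\rho(\Gamma_\eta)$ rather than fixing $\alpha$ first.

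The Neumann-series construction you say you would present is a genuinely different route to the key lemma. Taking $q=(I-\Gamma/\alpha)^{-1}\mathbf{1}$ uses only nonnegativity of $\Gamma$ and convergence of the series when $\rho(\Gamma/\alpha)<1$. It therefore needs neither Perron's theorem, nor continuity of the spectral radius, nor a reducible/irreducible case split. It also gives explicit information, namely $q\ge\mathbf{1}$ and indeed $\Gamma q=\alpha(q-\mathbf{1})$, for every prescribed $\alpha\in(\rho(\Gamma),1)$. That immediately yields the paper's follow-up remark that the contraction factor can be taken arbitrarily close to $\rho(\Gamma)$.

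What the paper's route buys in exchange is interpretation. Its weights are (a perturbation of) the Perron eigenvector, which is what the later discussion of ``Perron weights'' and the Perron direction in the supercritical construction refers to.
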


\begin{proof}
Since $\Gamma$ is nonnegative and $\rho(\Gamma) < 1$, Perron--Frobenius theory guarantees the existence of a vector $q \in \mathbb{R}^k_{++}$ and a constant $\alpha \in (0, 1)$ satisfying
\begin{equation}
\label{eq:pf}
\sum_{j=1}^k \Gamma_{ij} \, q_j \;\le\; \alpha \, q_i
\qquad \text{for every } i.
\end{equation}
If $\Gamma$ is reducible, apply the argument to the perturbed matrix $\Gamma_\eta := \Gamma + \eta \mathbf{1}\mathbf{1}^\top$ for $\eta > 0$ chosen small enough that $\rho(\Gamma_\eta) < 1$; the strictly positive Perron eigenvector $q$ of $\Gamma_\eta$ then satisfies $\Gamma q \le \Gamma_\eta q = \rho(\Gamma_\eta) q$, and we set $\alpha := \rho(\Gamma_\eta) \in (\rho(\Gamma), 1)$.

Fix $w, w' \in \mathcal{W}$ and an agent $i$. By the definition of the weighted norm, $\|w_j - w_j'\| \le q_j \, \|w - w'\|_q$ for every $j$. Substituting this bound into the best-response sensitivity inequality~\eqref{eq:br-Gamma} and then applying~\eqref{eq:pf} yields
\[
\|B_i(w) - B_i(w')\|
\;\le\;
\sum_{j=1}^k \Gamma_{ij} \, q_j \, \|w - w'\|_q
\;\le\;
\alpha \, q_i \, \|w - w'\|_q.
\]
Dividing by $q_i > 0$ and taking the maximum over $i$ gives the claim.
\end{proof}

Equivalently, for any $\alpha\in(\rho(\Gamma),1)$, one can choose positive weights $q$ such that $\Gamma q\le \alpha q$. The contraction factor can therefore be made arbitrarily close to $\rho(\Gamma)$. The proof uses only two facts: $\Gamma$ is nonnegative and $\rho(\Gamma)<1$.

The contraction theorem gives the standard fixed-point consequences. Since $B$ is a contraction on the complete space $(\mathcal{W},\|\cdot\|_q)$, Banach's fixed-point theorem applies.

\begin{corollary}[Existence, uniqueness, and geometric convergence]
\label{cor:convergence}
Under the conditions of Theorem~\ref{thm:contraction}:
\begin{enumerate}
\item The retraining map $B$ admits a unique fixed point $w^\star \in \mathcal{W}$, which is the unique performatively stable equilibrium of the system.
\item For every initialization $w_0 \in \mathcal{W}$, the iterates $w_{t+1} = B(w_t)$ satisfy
\[
\|w_t - w^\star\|_q
\;\le\;
\alpha^t \, \|w_0 - w^\star\|_q
\qquad \text{for every } t \ge 0.
\]
\end{enumerate}
\end{corollary}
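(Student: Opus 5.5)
The plan is to treat the Corollary as a direct application of Banach's fixed-point theorem to the map $B$ on the metric space $(\mathcal{W},\|\cdot\|_q)$, using Theorem~\ref{thm:contraction} for the contraction property.

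\textbf{Step 1: completeness.} I will first check that $\|\cdot\|_q$ is a genuine norm on $\mathbb{R}^{d_1}\times\cdots\times\mathbb{R}^{d_k}$. By Assumption~\ref{ass:closed}, $\mathcal{W}$ is exactly this finite-dimensional vector space. Since $q\in\mathbb{R}^k_{++}$, the weighted norm is equivalent to the Euclidean norm on the product, with constants
\[
\min_i q_i^{-1}\qquad\text{and}\qquad \max_i q_i^{-1}.
\]
Hence $(\mathcal{W},\|\cdot\|_q)$ is complete. This is the only point where Assumption~\ref{ass:closed} enters. Assumption~\ref{ass:sc} also guarantees that $B$ is well defined and maps $\mathcal{W}$ into itself.

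\textbf{Step 2: existence and uniqueness.} Theorem~\ref{thm:contraction} shows that $B$ is an $\alpha$-contraction with $\alpha<1$. Banach's theorem then gives a unique fixed point $w^\star$. By Definition~\ref{def:pse}, fixed points of $B$ are exactly the performatively stable equilibria, so $w^\star$ is the unique such equilibrium. Uniqueness can also be seen directly: if $w^\star$ and $\tilde w$ are both fixed points, then
\[
\|w^\star-\tilde w\|_q=\|B(w^\star)-B(\tilde w)\|_q\le\alpha\|w^\star-\tilde w\|_q,
\]
which forces $\|w^\star-\tilde w\|_q=0$.

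\textbf{Step 3: geometric convergence.} For item 2, I will argue by induction on $t$. The base case $t=0$ is trivial. For the inductive step, use $w_{t+1}=B(w_t)$ and $w^\star=B(w^\star)$:
\[
\|w_{t+1}-w^\star\|_q=\|B(w_t)-B(w^\star)\|_q\le\alpha\|w_t-w^\star\|_q\le\alpha^{t+1}\|w_0-w^\star\|_q .
\]

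\textbf{Main obstacle.} No step is a real obstacle. The only point that needs care is Step 1: Banach's theorem must be applied in the weighted norm rather than the Euclidean one, so completeness has to be justified through norm equivalence and the finite-dimensionality provided by Assumption~\ref{ass:closed}.
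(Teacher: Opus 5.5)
Your proposal is correct and matches the paper's proof: completeness of $(\mathcal{W},\|\cdot\|_q)$ from Assumption~\ref{ass:closed}, Banach's theorem with the contraction from Theorem~\ref{thm:contraction} for existence and uniqueness, and iteration of the contraction against $w^\star=B(w^\star)$ for the geometric rate. One inconsequential slip: relative to the Euclidean product norm, the lower equivalence constant should be $\min_i q_i^{-1}/\sqrt{k}$, since your constants are exact for $\max_i\|x_i\|$; completeness is unaffected.
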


\begin{proof}
Assumption~\ref{ass:closed} ensures that $\mathcal{W}$ is complete under $\|\cdot\|_q$ and that $B$ maps $\mathcal{W}$ into itself. Theorem~\ref{thm:contraction} shows that $B$ is an $\alpha$-contraction with $\alpha\in(0,1)$. Banach's fixed-point theorem yields the unique fixed point $w^\star$. Iterating the contraction inequality with $w=w_t$ and $w'=w^\star$ gives the geometric convergence bound.
\end{proof}

Corollary~\ref{cor:convergence} gives the basic learnability result in the subcritical regime. The system has a unique performatively stable equilibrium, and repeated retraining reaches it at a geometric rate from any initialization. The condition is stated through $\Gamma$, which is constructed from primitive per-agent quantities.

If retraining instead satisfies $\|w_{t+1}-B(w_t)\|_q\le\eta_t$, then
\[
\|w_{t+1}-w^\star\|_q\le\alpha\|w_t-w^\star\|_q+\eta_t.
\]
Appendix~\ref{app:approximate-retraining} gives the resulting error bound and its interpretation for empirical retraining with controlled gradient error.

Contraction of the state dynamics alone does not give a performance bound. To control cumulative excess loss, we add smoothness in each agent's own action.

\begin{assumption}[Smoothness in own action]
\label{ass:smooth}
For each agent $i$ and every fixed deployed profile $w$, the map $u_i \mapsto F_i(u_i; w)$ is $L_i$-smooth.
\end{assumption}

The natural benchmark in a time-varying environment is the current best response. At round $t$, agent $i$ deploys $w_{i,t}$, while the action minimizing its risk under the current profile $w_t$ is $B_i(w_t)$. The gap between these two actions is the instantaneous excess loss, and summing over agents and rounds gives

\begin{equation}
\label{eq:regret-def}
\widetilde{\mathrm{Reg}}_T
\;:=\;
\sum_{t=0}^{T-1} \sum_{i=1}^k
\Big( F_i(w_{i,t}; w_t) - F_i(B_i(w_t); w_t) \Big).
\end{equation}

This benchmark compares each deployment to the best response in the environment it actually faces. It also avoids assumptions on how objective values vary with the deployed profile. Bounds against the fixed equilibrium comparator require such assumptions and are therefore treated separately. 

\begin{corollary}[Bounded cumulative dynamic excess loss]
\label{cor:bounded-regret}
Under Assumptions~\ref{ass:sc}--\ref{ass:gs},~\ref{ass:closed}, and~\ref{ass:smooth}, and assuming $\rho(\Gamma) < 1$, there exists a constant $C > 0$ depending only on $\Gamma$, $q$, and $\{L_i\}$ such that
\[
\widetilde{\mathrm{Reg}}_T
\;\le\;
\frac{C}{1 - \alpha^2} \, \|w_0 - w^\star\|_q^2
\qquad \text{for every } T \ge 1.
\]
In particular, the cumulative excess loss is bounded uniformly in the horizon $T$.
\end{corollary}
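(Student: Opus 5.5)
The argument goes term by term. I will use the descent inequality from smoothness to bound each instantaneous excess loss by a squared distance to the current best response. I will then control that distance by the contraction of Corollary~\ref{cor:convergence}, and sum a geometric series.

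\emph{Step 1 (smoothness bound).} Fix $t$ and $i$, and write $u^\star := B_i(w_t)$. First-order optimality (Assumption~\ref{ass:closed} makes the domain unconstrained) gives $\nabla_{u_i}F_i(u^\star;w_t)=0$. Assumption~\ref{ass:smooth} then yields
\[
F_i(w_{i,t};w_t)-F_i(u^\star;w_t)\le \tfrac{L_i}{2}\,\|w_{i,t}-B_i(w_t)\|^2 .
\]

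\emph{Step 2 (distance to the best response).} Since $w_{t+1}=B(w_t)$, we have $w_{i,t}-B_i(w_t)=w_{i,t}-w_{i,t+1}$. The triangle inequality through $w^\star$ gives
\[
\|w_{i,t}-w_{i,t+1}\|\le q_i\big(\|w_t-w^\star\|_q+\|w_{t+1}-w^\star\|_q\big)\le q_i(1+\alpha)\,\alpha^t\|w_0-w^\star\|_q ,
\]
using Corollary~\ref{cor:convergence}. An alternative that avoids the triangle step is $\|w_{i,t}-B_i(w_t)\|\le \|w_{i,t}-w^\star_i\|+\|B_i(w_t)-B_i(w^\star)\|$, with the second term bounded by $(\Gamma q)_i\|w_t-w^\star\|_q\le\alpha q_i\|w_t-w^\star\|_q$ via \eqref{eq:br-Gamma} and \eqref{eq:pf}. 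Either route gives a per-agent bound of the form $c_i\,\alpha^t\|w_0-w^\star\|_q$.

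\emph{Step 3 (summation).} Squaring and summing over $i$ gives an instantaneous excess loss of at most $\tfrac12\sum_i L_i q_i^2(1+\alpha)^2\,\alpha^{2t}\|w_0-w^\star\|_q^2$. Summing over $t=0,\dots,T-1$ and bounding by the infinite geometric series $\sum_t\alpha^{2t}=1/(1-\alpha^2)$ gives the claim with
\[
C:=2\sum_i L_i q_i^2 .
\]
This uses $(1+\alpha)^2\le 4$, so $C$ depends only on $\{L_i\}$ and $q$, where $q$ and $\alpha$ are determined by $\Gamma$ through Theorem~\ref{thm:contraction}. The bound is independent of $T$.

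None of the steps is a real obstacle. The only point needing care is that $C$ must not depend on $\alpha$ in a way that hides a blowup. Bounding $(1+\alpha)^2$ by $4$ handles this.
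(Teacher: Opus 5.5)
Your proof is correct and follows essentially the same route as the paper. Both arguments use the smoothness bound $\tfrac{L_i}{2}\|w_{i,t}-B_i(w_t)\|^2$, then a triangle inequality through $w^\star$ combined with the geometric rate of Corollary~\ref{cor:convergence}, and finally a geometric series in $\alpha^{2t}$. The one difference is cosmetic: you use the forward identity $B_i(w_t)=w_{i,t+1}$, whereas the paper writes $w_{i,t}=B_i(w_{t-1})$ and applies the sensitivity bound once more, picking up the row-sum factor $(\Gamma q)_i\le\alpha q_i$ and reaching the same $q_i(1+\alpha)\alpha^t\|w_0-w^\star\|_q$ bound. Because of this, your version needs no separate $t=0$ case and gives the explicit constant $C=2\sum_i L_i q_i^2$.
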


\begin{proof}
Fix $t \ge 1$ and $i$. The best response $B_i(w_t)$ minimizes $u_i \mapsto F_i(u_i; w_t)$, which is $L_i$-smooth by Assumption~\ref{ass:smooth}. Standard smoothness gives
\begin{equation}
\label{eq:smooth-quad}
F_i(w_{i,t}; w_t) - F_i(B_i(w_t); w_t)
\;\le\;
\frac{L_i}{2} \, \|w_{i,t} - B_i(w_t)\|^2.
\end{equation}
For $t \ge 1$, the dynamics~\eqref{eq:dynamics} give $w_{i,t} = B_i(w_{t-1})$, so
\[
\|w_{i,t} - B_i(w_t)\|
\;=\;
\|B_i(w_{t-1}) - B_i(w_t)\|
\;\le\;
\sum_{j=1}^k \Gamma_{ij} \, \|w_{j,t-1} - w_{j,t}\|,
\]
where the inequality is~\eqref{eq:br-Gamma}. Using $\|w_{j,t}-w_{j,t-1}\|\le q_j\|w_t-w_{t-1}\|_q$ gives
\[
\|w_{i,t}-B_i(w_t)\|
\;\le\;
\Big(\sum_{j=1}^k \Gamma_{ij}q_j\Big)\|w_t-w_{t-1}\|_q.
\] 
Substituting into~\eqref{eq:smooth-quad} and summing over $i$ gives
\[
\sum_{i=1}^k
\Big( F_i(w_{i,t}; w_t) - F_i(B_i(w_t); w_t) \Big)
\;\le\;
C' \, \|w_t - w_{t-1}\|_q^2,
\]
for a constant $C'$ absorbing the $L_i$ and the row-sum factors $\big(\sum_j \Gamma_{ij} q_j\big)^2$. By Corollary~\ref{cor:convergence} and the triangle inequality,
\[
\|w_t - w_{t-1}\|_q
\;\le\;
\|w_t - w^\star\|_q + \|w_{t-1} - w^\star\|_q
\;\le\;
(1 + \alpha) \, \alpha^{t-1} \, \|w_0 - w^\star\|_q,
\]
so the per-round excess loss decays as $\alpha^{2(t-1)}$. Summing the geometric series from $t = 1$ to $T - 1$ gives a contribution of order $\|w_0 - w^\star\|_q^2 / (1 - \alpha^2)$. For $t = 0$, $\|w_0 - B(w_0)\|_q \le \|w_0 - w^\star\|_q + \|B(w^\star) - B(w_0)\|_q \le (1 + \alpha)\|w_0 - w^\star\|_q$, which fits the same geometric series. Combining yields the claim.
\end{proof}

The bound depends on the initialization and the contraction factor but not on the time horizon, with the $1/(1-\alpha^2)$ factor arising from the geometric decay of the one-step gap together with smoothness. Below the threshold, cumulative excess loss is thus $O(1)$; Section~\ref{sec:lower-bound} constructs supercritical instances with $\Omega(T)$ cumulative excess loss over a critical horizon.

The geometric bound gives iteration complexity $O\!\left(\log(\|w_0-w^\star\|_q/\delta)/(-\log\alpha)\right)$ to accuracy $\delta$. This bound and the factor $1/(1-\alpha^2)$ deteriorate as $\alpha\to1^-$. Figure~\ref{fig:threshold} illustrates the slowdown in a rescaled linear family as $\rho(\Gamma)$ approaches one.

\section{Supercritical Regime}
\label{sec:lower-bound}

Section~\ref{sec:subcritical} established that $\rho(\Gamma)<1$ is sufficient for geometric convergence and bounded cumulative excess loss. Whether the supercritical regime produces an intrinsic learning obstruction is delicate: failure of contraction does not by itself imply information-theoretic impossibility, since with enough observations a learner may recover the correct equilibrium even when repeated retraining diverges. We show that the threshold has a sharp negative side in the worst case. For every $\varepsilon>0$, we construct a supercritical family with $\rho(\Gamma)=1+\varepsilon$ in which any decentralized local-information learner---one that at each round uses only its own history of observations and actions---incurs linear cumulative excess loss for $T=O(1/\varepsilon)$ rounds.

\begin{figure}[t]
  \centering
  \begin{subfigure}[t]{0.48\linewidth}
    \centering
    \includegraphics[width=\linewidth]{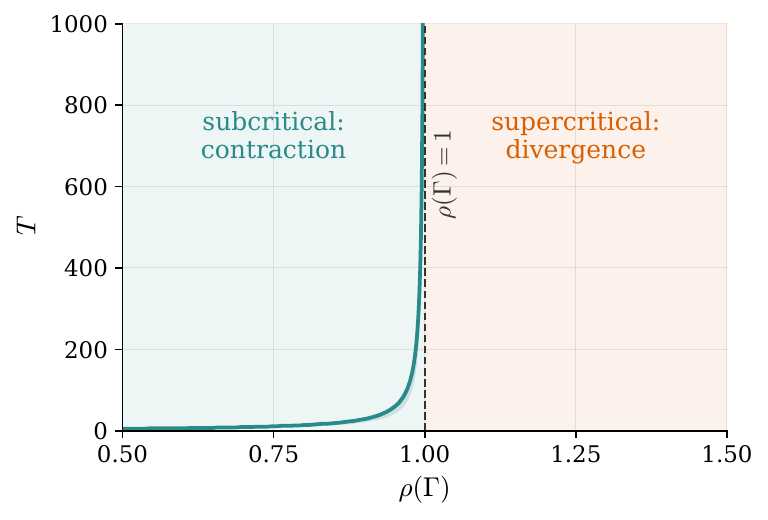}
    \caption{Number of retraining steps needed to satisfy $\|w_t-w^\star\|_q \le \delta$. Convergence slows sharply as $\rho(\Gamma)$ approaches $1$ from below. Only subcritical values are plotted, with a computational cap of $10{,}000$ iterations and a displayed vertical range up to $1{,}000$.}
    \label{fig:phase}
  \end{subfigure}
  \hfill
  \begin{subfigure}[t]{0.48\linewidth}
    \centering
    \includegraphics[width=\linewidth]{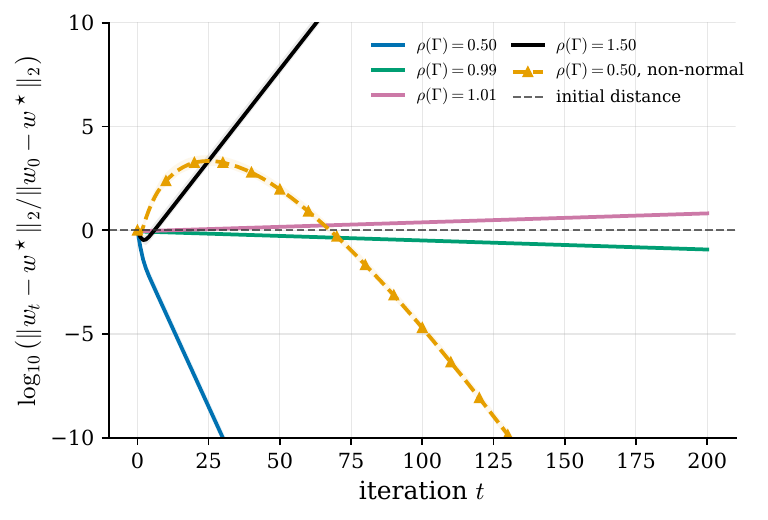}
    \caption{Normalized distance to equilibrium (log scale) under repeated retraining. Subcritical instances contract geometrically, while the plotted supercritical instances move away from equilibrium. The non-normal subcritical curve shows transient growth before contraction.}
    \label{fig:trajectories}
  \end{subfigure}
    \caption{\textbf{The reflexivity threshold.} 
    The simulations show that convergence slows as $\rho(\Gamma)$ approaches $1$ from below. 
    Below the threshold, repeated retraining moves geometrically toward equilibrium; 
    above the threshold, the plotted trajectories move away from equilibrium. 
    Setup details are in Appendix~\ref{app:simulations}.}
  \label{fig:threshold}
\end{figure}

All agents know the model parameters $(\Gamma_\varepsilon, \Delta, \sigma_\varepsilon^2, v)$; only the hidden parameter $\nu$ is unknown.

\begin{definition}[Decentralized local-information learner]
\label{def:decentralized}
A \emph{decentralized local-information learner} is a policy profile $A = (A_1, \dots, A_k)$ in which agent $i$'s action $w_{i,t}$ at round $t$ is measurable with respect to the private history $\mathcal{H}_{i,t}^{\mathrm{pre}} := \sigma\big( w_{i, 0:t-1}, \; Y_{i, 0:t-1}, \; U_{i, 0:t} \big)$, where $Y_{i,s}$ is agent $i$'s private observation at round $s$ and $U_{i,s}$ is agent $i$'s internal randomization.
\end{definition}

No agent observes the full system state: each sees only its own signal, without access to other agents' actions or observations. The lower bound applies to any learner satisfying this constraint, regardless of the algorithm used within each agent.

\paragraph{Hard family.}
The construction places all endogenous amplification along a single direction, which simplifies the analysis and makes the separation between the two environments explicit.

Fix $\varepsilon \in (0, 1]$ and a unit vector $v \in \mathbb{R}^k$ with nonnegative entries. Define the reflexivity matrix
\begin{equation}
\label{eq:rank-one}
\Gamma_\varepsilon \;:=\; (1 + \varepsilon) \, v v^\top.
\end{equation}
The matrix $\Gamma_\varepsilon$ is nonnegative, symmetric, and rank one, with single nonzero eigenvalue $1+\varepsilon$, so $\rho(\Gamma_\varepsilon)=1+\varepsilon$. The system is therefore supercritical by construction, with $\varepsilon$ measuring the distance above the threshold. Each agent has $d_i = 1$ and $\mathcal{W}_i = \mathbb{R}$, and the quadratic loss gives $\mu_i = 1$, so the abstract reflexivity matrix of Section~\ref{sec:reduction} coincides with $\Gamma_\varepsilon$ in this construction.

The environment contains a hidden binary parameter $\nu \in \{-1, +1\}$ that determines the true state of the world. Fix an amplitude $\Delta > 0$ and let the noise variance be $\sigma_\varepsilon^2 := 8 \Delta^2 / \varepsilon$. The observation model is
\begin{equation}
\label{eq:obs-model}
Y_t \;=\; \nu \Delta v + \Gamma_\varepsilon w_t + \xi_t,
\qquad
\xi_t \sim \mathcal{N}(0, \sigma_\varepsilon^2 I_k),
\end{equation}
where each agent $i$ observes only the coordinate $Y_{i,t}$.

Each agent uses quadratic loss $\ell_i(a; y) := \tfrac{1}{2}(a - y)^2$, which gives the performative risk
\begin{equation}
\label{eq:perf-risk}
F_i(a; w, \nu)
\;=\;
\frac{1}{2} \Big( a - \nu \Delta v_i - (\Gamma_\varepsilon w)_i \Big)^2
+ \frac{\sigma_\varepsilon^2}{2}.
\end{equation}
The additive constant $\sigma_\varepsilon^2/2$ cancels in differences and is dropped below.
Minimizing over $a$ gives the best-response map $B(w; \nu) = \nu \Delta v + \Gamma_\varepsilon w$. The fixed-point equation $w^{\star, \nu} = B(w^{\star, \nu}; \nu)$ has the explicit solution
\begin{equation}
\label{eq:equilibria}
w^{\star, \nu}
\;=\;
(I - \Gamma_\varepsilon)^{-1} \nu \Delta v
\;=\;
-\frac{\nu \Delta}{\varepsilon} \, v.
\end{equation}
The last equality uses the identity $(I - \Gamma_\varepsilon) v = -\varepsilon v$, which follows from the definition~\eqref{eq:rank-one}.

The two equilibria $w^{\star, +}$ and $w^{\star, -}$ are therefore located at distance $2 \Delta / \varepsilon$ along $v$. As $\varepsilon$ decreases, these equilibria move apart, and the geometric penalty for choosing the wrong sign grows. This divergence is the mechanism that converts slow information acquisition into large cumulative excess loss.

The instantaneous excess performative loss at round $t$ is
\begin{equation}
\label{eq:inst-regret}
r_t^\nu
\;:=\;
\sum_{i=1}^k
\Big( F_i(w_{i,t}; w_t, \nu) - F_i(w_i^{\star, \nu}; w^{\star, \nu}, \nu) \Big),
\end{equation}
and the cumulative excess loss over $T$ rounds is $\mathrm{Reg}_T^\nu(A) := \sum_{t=0}^{T-1} r_t^\nu$.

The subcritical result uses a dynamic comparator (the current best response $B_i(w_t)$), whereas~\eqref{eq:inst-regret} uses the fixed equilibrium comparator $w^{\star,\nu}$: the dynamic comparator measures one-step adaptation to the current caused environment, while the equilibrium comparator measures whether the learner identifies the correct stable equilibrium. In the supercritical construction these goals decouple, so the equilibrium comparator is the appropriate benchmark for the two-environment testing argument. For the quadratic family in~\eqref{eq:perf-risk} the two comparators in fact yield identical instantaneous excess loss, so the $\Omega(\Delta^2 T)$ bound holds under either benchmark.


\begin{theorem}[Linear lower bound in a supercritical family]
\label{thm:lower-bound}
There exist absolute constants $c_0, c_1 > 0$ such that the following holds. For every $\varepsilon \in (0, 1]$, in the supercritical family defined by~\eqref{eq:rank-one}--\eqref{eq:obs-model} with $\sigma_\varepsilon^2 = 8\Delta^2/\varepsilon$, every decentralized local-information learner $A$ satisfies
\[
\frac{1}{2} \, \mathbb{E}_+\big[ \mathrm{Reg}_T^+(A) \big]
+ \frac{1}{2} \, \mathbb{E}_-\big[ \mathrm{Reg}_T^-(A) \big]
\;\ge\;
c_1 \, \Delta^2 \, T
\qquad
\text{for every } T \le c_0 / \varepsilon.
\]
In particular,
\[
\sup_{\nu \in \{-1, +1\}} \mathbb{E}_\nu\big[ \mathrm{Reg}_T^\nu(A) \big]
\;\ge\;
c_1 \, \Delta^2 \, T
\qquad
\text{for every } T \le c_0 / \varepsilon.
\]
\end{theorem}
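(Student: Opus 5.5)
Throughout, $s_t := v^\top w_t$ denotes the component of the profile along the Perron direction $v$. The plan is a standard two-point (Le Cam) argument, which rests on two facts: a deterministic lower bound on the sum of the two instantaneous losses, and a bound on how well the learner can distinguish $\nu=+1$ from $\nu=-1$ within $T$ rounds.

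\emph{Step 1: pointwise two-point separation.} At the equilibrium $w^{\star,\nu}$ the residual $w^{\star,\nu}_i-\nu\Delta v_i-(\Gamma_\varepsilon w^{\star,\nu})_i$ vanishes, so by \eqref{eq:perf-risk}
\[
r_t^\nu=\tfrac12\big\|(I-\Gamma_\varepsilon)w_t-\nu\Delta v\big\|^2 .
\]
Project onto the unit vector $v$. Using $v^\top(I-\Gamma_\varepsilon)=-\varepsilon v^\top$ gives
\[
r_t^\nu\ \ge\ \tfrac12\,(\varepsilon s_t+\nu\Delta)^2 .
\]
For every realization of $w_t$ we therefore have
\[
r_t^++r_t^-\ \ge\ \tfrac12\big[(\varepsilon s_t+\Delta)^2+(\varepsilon s_t-\Delta)^2\big]\ \ge\ \Delta^2 .
\]
This holds even though $w_t$ is chosen adaptively.

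\emph{Step 2: reduction to total variation.} Let $P_\nu^{t}$ be the joint law of $(U_{0:t},Y_{0:t-1})$ for all agents under $\nu$. The action $w_t$ is a measurable function of this collection: each agent's action is measurable with respect to its own history, and that history is contained in the joint one. Hence
\[
\tfrac12\mathbb{E}_+[r_t^+]+\tfrac12\mathbb{E}_-[r_t^-]\ \ge\ \tfrac12\int \min(dP_+^t,dP_-^t)\,(r_t^++r_t^-)\ \ge\ \tfrac{\Delta^2}{2}\big(1-\mathrm{TV}(P_+^t,P_-^t)\big).
\]
The first inequality uses $r_t^\nu\ge0$. It therefore suffices to show $\mathrm{TV}(P_+^t,P_-^t)\le 1/2$ for all $t<T\le c_0/\varepsilon$. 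Summing over $t$ then gives the claim with $c_1=1/4$. The ``in particular'' statement follows because a maximum dominates an average.

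\emph{Step 3: KL bound via the chain rule.} This is the main step, though the difficulty is bookkeeping rather than depth. The randomizations $U$ have the same law under both hypotheses, independent of $\nu$. Conditionally on the past, $w_s$ is the same measurable function of the past under both $\nu$. So $Y_s\mid\text{past}\sim\mathcal N(\nu\Delta v+\Gamma_\varepsilon w_s,\sigma_\varepsilon^2 I_k)$, and the two conditional means differ by exactly $2\Delta v$. The chain rule for KL then gives a per-round contribution of
\[
\frac{\|2\Delta v\|^2}{2\sigma_\varepsilon^2}=\frac{2\Delta^2}{8\Delta^2/\varepsilon}=\frac{\varepsilon}{4},
\]
so $\mathrm{KL}(P_+^t\|P_-^t)\le t\varepsilon/4\le c_0/4$. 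Pinsker's inequality yields $\mathrm{TV}\le\sqrt{c_0/8}\le 1/2$ once $c_0\le 2$.

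The point to handle carefully is that the feedback term $\Gamma_\varepsilon w_s$ is identical under both hypotheses given the history, so it cancels in the likelihood ratio. The supercritical amplification then enters only through the factor $1/\varepsilon$ in the equilibrium gap, which is exactly offset by the choice of $\sigma_\varepsilon^2$. Decentralization is not needed for the bound: we bound the information of the pooled histories, which dominates the information of any single agent.
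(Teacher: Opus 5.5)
Your proof is correct and follows essentially the same route as the paper: the per-round Gaussian KL of $\varepsilon/4$ under pooled (centralized) histories, then the chain rule and Pinsker, and the projection onto $v$, where the factor $\varepsilon$ from $(I-\Gamma_\varepsilon)$ cancels the $1/\varepsilon$ equilibrium separation. The only difference is in the final reduction: the paper uses a midpoint decoder and shows $r_t^\nu\ge\Delta^2/2$ on the misidentification event, whereas you use the pointwise bound $r_t^++r_t^-\ge\Delta^2$ with $\int\min(dP_+^t,dP_-^t)=1-\mathrm{TV}$, which avoids the decoder and gives explicit constants $c_0=2$, $c_1=1/4$.
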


The proof proceeds in three steps. Step~1 shows that the two environments $\nu = +1$ and $\nu = -1$ remain statistically indistinguishable over the critical horizon. Step~2 shows that misidentifying the environment forces the learner into a region of large excess loss. Step~3 combines these two facts.

\begin{proof}[Proof sketch; full proof in Appendix~\ref{app:lb}.]
The argument has two parts.

\emph{Indistinguishability.} We prove the KL bound under a stronger centralized observation model that reveals the full profile $w_t$ and vector observation $Y_t$; the bound therefore also applies to decentralized learners. Conditional on the realized pre-observation history $\mathcal{F}_t$, the action $w_t$ is fixed by the learner's policy. Under the two hypotheses, the next observation is Gaussian with common covariance $\sigma_\varepsilon^2 I_k$ and conditional means differing by $2\Delta v$. The per-round Kullback--Leibler divergence is therefore $\|2\Delta v\|^2/(2\sigma_\varepsilon^2) = \varepsilon/4$, using $\|v\|=1$ and $\sigma_\varepsilon^2=8\Delta^2/\varepsilon$. By the chain rule for KL divergence and Pinsker's inequality, the total variation distance remains bounded away from one for all $t\le c_0/\varepsilon$, so the optimal equal-prior testing error is bounded below by an absolute constant $p_0>0$.

\emph{Geometric cost of misidentification.} The two equilibria~\eqref{eq:equilibria} lie at $\pm(\Delta/\varepsilon) v$, so on the event that the midpoint decoder picks the wrong sign, $|\langle w_t - w^{\star,\nu}, v \rangle| \ge \Delta/\varepsilon$. The closed form $r_t^\nu = \tfrac{1}{2}\|(I-\Gamma_\varepsilon)(w_t - w^{\star,\nu})\|^2$ together with $(I-\Gamma_\varepsilon)v = -\varepsilon v$ then gives $r_t^\nu \ge \Delta^2/2$ on that event. The factor $\varepsilon$ from the contraction of $(I-\Gamma_\varepsilon)$ along $v$ cancels exactly with the factor $1/\varepsilon$ from the equilibrium separation, which is why the obstruction survives the limit $\varepsilon \to 0$ and produces a horizon of order $1/\varepsilon$.

Combining the two parts: averaging over $\nu$ and using equal-prior testing error at least $p_0$ gives $\tfrac{1}{2}\mathbb{E}_+[r_t^+] + \tfrac{1}{2}\mathbb{E}_-[r_t^-] \ge p_0\Delta^2/2$ for every $t \le c_0/\varepsilon$, and summing yields the claim with $c_1 = p_0/2$.
\end{proof}

The positive theorem assumes exact population best responses, whereas the lower bound concerns noisy observations over a finite horizon, with $\sigma_\varepsilon^2=8\Delta^2/\varepsilon$. The proof bounds indistinguishability even for a centralized learner observing the same signals, so communication alone does not remove this obstruction in the constructed family.

\section{Discussion}
\label{sec:discussion}

\paragraph{Reflexivity matrix and interpretation.}
The framework replaces a scalar Lipschitz constant with the reflexivity matrix $\Gamma$, whose entries $a_{ij}/\mu_i$ retain the per-pair structure of performative feedback. This matters because two systems with the same scalar bound can have different stability behavior if their feedback is routed through different agent networks. The Perron weights $q$ in Theorem~\ref{thm:contraction} encode this heterogeneity by rescaling agents according to their role in the collective retraining dynamics. When $k=1$, $\Gamma$ reduces to the scalar ratio $a_{11}/\mu_1$, recovering the single-agent contraction threshold of \citet{perdomo2020performative}. More generally, $\rho(\Gamma)=1$ separates dissipative feedback, where repeated retraining contracts, from supercritical feedback, where uniform stability guarantees fail. We use \emph{reflexivity} for this deployment-to-data channel, following the financial-market terminology of \citet{soros1987alchemy}; we do not formalize the broader economic concept. 

\paragraph{Beyond the spectral radius.}
The threshold $\rho(\Gamma)=1$ governs asymptotic behavior, but finite-horizon dynamics can depend on finer matrix properties and initial conditions. One important case is non-normality: $\|\Gamma^t\|$ may grow transiently even if $\rho(\Gamma)<1$, as illustrated by the non-normal subcritical trajectory in Panel~\ref{fig:trajectories}. Thus the spectral radius identifies the asymptotic threshold, while quantities such as the numerical range or pseudospectra may better capture finite-horizon amplification. We leave this refinement to future work.

\paragraph{Empirical measurement.}
The framework identifies a stability quantity, $\rho(\Gamma)$, that is measurable in principle but difficult to identify empirically. Each entry $a_{ij}$ is counterfactual: it describes how agent $i$'s gradient would respond to alternative deployments by agent $j$, while $\mu_i$ is also not directly observed. Estimation is therefore a causal identification problem, since common system shocks may affect both agent $j$'s deployment and agent $i$'s gradient. Testing whether estimated values of $\rho(\Gamma)$ predict observed instability is a concrete direction for future work. The primitive bounds are useful when $B$ is defined implicitly by a nonlinear optimizer: curvature and pairwise gradient sensitivity can be bounded without solving or differentiating the full best-response map. The certificate can be conservative and depends on the chosen block geometry; failure of $\rho(\Gamma)<1$ alone does not diagnose instability. As a design lever, adding a fixed-reference quadratic penalty $\tfrac{\lambda_i}{2}\|u_i-u_i^{\mathrm{ref}}\|^2$ replaces $\Gamma_{ij}$ by $a_{ij}/(\mu_i+\lambda_i)$, potentially restoring the certificate, while changing the objective and generally the equilibrium.

\paragraph{Limitations and extensions.}
Several modeling choices delimit our results. The main guarantees assume strong convexity in each agent's own action and exact best-response retraining; Appendix~\ref{app:approximate-retraining} controls approximate retraining under explicit error bounds. Sharper finite-sample rates, weaker curvature assumptions, and gradient updates remain directions for future work. We also treat the reflexivity matrix as fixed across rounds. If the interaction structure evolves, a natural extension is to study time-varying matrices $\Gamma_t$, where tools such as joint spectral radii may be needed to characterize stability. The lower bound establishes that no universal positive guarantee can extend above the threshold, since at least one supercritical family forces linear cumulative excess loss; it does not assert that every supercritical $\Gamma$ is unlearnable. Characterizing which supercritical systems remain learnable is open for future work. Coordination among agents may shift the threshold, while damped retraining is a natural design lever for stabilizing otherwise unstable systems.

\begin{ack}
This work received no funding. The authors declare no competing interests.
\end{ack}

\bibliographystyle{plainnat}
\bibliography{references}

\appendix

\section{Full Proof of Theorem~\ref{thm:lower-bound}}
\label{app:lb}

This appendix gives the full proof of Theorem~\ref{thm:lower-bound}, expanding the
sketch in Section~\ref{sec:lower-bound}. We retain the same notation: $P_\nu$
denotes the law of the full system trajectory under learner $A$ in environment
$\nu \in \{-1, +1\}$, and $\mathcal{F}_t$ denotes the history up to and including
the action $w_t$ but excluding the observation $Y_t$.

\begin{proof}[Proof of Theorem~\ref{thm:lower-bound}]
\emph{Step 1: Bounding the Kullback--Leibler divergence.}
The KL bound is proved under a stronger centralized observation model in which the learner sees the full profile $w_t$ and vector observation $Y_t$. Since decentralized learners observe only part of this information, the same indistinguishability bound applies to the learners in Theorem~\ref{thm:lower-bound}. Conditional on the realized pre-observation history $\mathcal{F}_t$, the action $w_t$ is fixed by the learner's policy. The observation $Y_t$ is then drawn from a Gaussian whose mean differs between the two environments only through the term $\nu \Delta v$. Specifically,
\[
P_\nu(Y_t \mid \mathcal{F}_t)
\;=\;
\mathcal{N}\big( \nu \Delta v + \Gamma_\varepsilon w_t, \, \sigma_\varepsilon^2 I_k \big).
\]
The means differ by $2 \Delta v$, and the covariance matrices are identical. The Kullback--Leibler divergence between two Gaussians with common covariance $\sigma^2 I$ and mean difference $\mu$ equals $\|\mu\|^2 / (2 \sigma^2)$. Applying this formula and using $\sigma_\varepsilon^2 = 8\Delta^2/\varepsilon$,
\[
\mathrm{KL}\big( P_+(Y_t \mid \mathcal{F}_t) \,\|\, P_-(Y_t \mid \mathcal{F}_t) \big)
\;=\;
\frac{\|2 \Delta v\|^2}{2 \sigma_\varepsilon^2}
\;=\;
\frac{2 \Delta^2}{\sigma_\varepsilon^2}
\;=\;
\frac{\varepsilon}{4}.
\]
By the chain rule for Kullback--Leibler divergence, the total divergence after $t$ rounds is
\begin{equation}
\label{eq:kl-chain}
\mathrm{KL}\big( P_+^t \,\|\, P_-^t \big)
\;=\;
\frac{t \varepsilon}{4}.
\end{equation}
Choose $c_0 := 1/2$. For every $t \le c_0 / \varepsilon$, Pinsker's inequality gives
\[
\mathrm{TV}(P_+^t, P_-^t)
\;\le\;
\sqrt{ \frac{1}{2} \mathrm{KL}(P_+^t \,\|\, P_-^t) }
\;\le\;
\sqrt{ \frac{t \varepsilon}{8} }
\;\le\;
\sqrt{1/16}
\;=\;
\frac{1}{4}.
\]
The minimax testing error for deciding $\nu$ from the observed history is therefore bounded below by an absolute constant $p_0 > 0$, independent of $\varepsilon$ and $t$.

\emph{Step 2: Geometric cost of misidentification.}
Define the midpoint decoder
\[
\widehat{\nu}_t
\;:=\;
\argmin_{s \in \{-1, +1\}} \|w_t - w^{\star, s}\|.
\]
On the event $\{\widehat{\nu}_t \neq \nu\}$, the action $w_t$ lies at least as close to the incorrect equilibrium as to the correct one. Using~\eqref{eq:equilibria}, the two equilibria lie on opposite sides of the origin along $v$ at distance $\Delta / \varepsilon$ each. The midpoint between them is the origin, so $\widehat{\nu}_t \neq \nu$ implies $\nu\langle w_t,v\rangle\ge0$. Consequently,
\begin{equation}
\label{eq:wrong-side}
|\langle w_t - w^{\star, \nu}, v \rangle|
\;\ge\;
\frac{\Delta}{\varepsilon}
\qquad
\text{on } \{\widehat{\nu}_t \neq \nu\}.
\end{equation}
For the quadratic family, the equilibrium identity $w^{\star,\nu} = \nu \Delta v + \Gamma_\varepsilon w^{\star,\nu}$ implies $F_i(w_i^{\star,\nu}; w^{\star,\nu}, \nu) = \sigma_\varepsilon^2 / 2$. Substituting into~\eqref{eq:inst-regret} and summing over $i$ gives
\begin{equation}
\label{eq:regret-form}
r_t^\nu
\;=\;
\frac{1}{2} \, \|w_t - \nu \Delta v - \Gamma_\varepsilon w_t\|^2
\;=\;
\frac{1}{2} \, \|(I - \Gamma_\varepsilon)(w_t - w^{\star, \nu})\|^2,
\end{equation}
where the second equality uses $\nu \Delta v = (I - \Gamma_\varepsilon) w^{\star,\nu}$.

The identity $(I - \Gamma_\varepsilon) v = -\varepsilon v$ implies that $(I - \Gamma_\varepsilon)$ contracts the direction $v$ by the factor $\varepsilon$. Combining~\eqref{eq:wrong-side} with the lower bound
\[
\|(I - \Gamma_\varepsilon)(w_t - w^{\star, \nu})\|
\;\ge\;
| \langle (I - \Gamma_\varepsilon)(w_t - w^{\star, \nu}), v \rangle |
\;=\;
\varepsilon \, |\langle w_t - w^{\star, \nu}, v \rangle|,
\]
and substituting~\eqref{eq:wrong-side},
\[
\|(I - \Gamma_\varepsilon)(w_t - w^{\star, \nu})\|
\;\ge\;
\varepsilon \cdot \frac{\Delta}{\varepsilon}
\;=\;
\Delta
\qquad
\text{on } \{\widehat{\nu}_t \neq \nu\}.
\]
Substituting into~\eqref{eq:regret-form} yields
\begin{equation}
\label{eq:regret-on-event}
r_t^\nu \;\ge\; \frac{\Delta^2}{2}
\qquad
\text{on } \{\widehat{\nu}_t \neq \nu\}.
\end{equation}
The factor $\varepsilon$ from the contraction of $(I - \Gamma_\varepsilon)$ and the factor $1 / \varepsilon$ from the equilibrium separation cancel exactly. This cancellation is why the lower bound survives the limit $\varepsilon \to 0$ and produces a horizon of order $1 / \varepsilon$ rather than a faster-vanishing horizon.

\emph{Step 3: Combining the bounds.}
Taking expectation of~\eqref{eq:regret-on-event} under environment $\nu$ gives
\[
\mathbb{E}_\nu[r_t^\nu]
\;\ge\;
\frac{\Delta^2}{2} \, P_\nu(\widehat{\nu}_t \neq \nu).
\]
Averaging over $\nu \in \{-1, +1\}$,
\[
\frac{1}{2} \mathbb{E}_+[r_t^+] + \frac{1}{2} \mathbb{E}_-[r_t^-]
\;\ge\;
\frac{\Delta^2}{2} \cdot
\frac{1}{2} \Big( P_+(\widehat{\nu}_t \neq +1) + P_-(\widehat{\nu}_t \neq -1) \Big).
\]
The bracketed quantity is twice this decoder's equal-prior error, which is at least the optimal equal-prior testing error and hence at least $p_0$ by Step~1 for every $t \le c_0 / \varepsilon$. Therefore,
\[
\frac{1}{2} \mathbb{E}_+[r_t^+] + \frac{1}{2} \mathbb{E}_-[r_t^-]
\;\ge\;
\frac{p_0 \Delta^2}{2}
\qquad
\text{for every } t \le c_0 / \varepsilon.
\]
Summing over $t = 0, \dots, T - 1$ gives the first claim with $c_1 := p_0 / 2$. The supremum bound follows because the maximum over $\nu$ dominates the average.
\end{proof}

\section{Simulation Details for Figure~\ref{fig:threshold}}
\label{app:simulations}

All instances have $k=100$. The sparse matrices have zero diagonal; each off-diagonal position is independently selected with probability $0.15$ and assigned a uniform $[0,1]$ weight, before rescaling to the target spectral radius. The non-normal instance is constructed separately as described below. Code, seeds, and commands to reproduce the figures are provided in the supplementary simulation artifact.

\paragraph{Panel~\ref{fig:phase} (phase diagram).} With seed $0$, we generate $20$ sparse base matrices with spectral radius one. Each is paired with $5$ Gaussian initial states normalized to $\|w_0\|_q=1$ using its positive right Perron vector $q$. The dynamics are $w_{t+1}=\Gamma w_t$ with $w^\star=0$. For each $\rho$ on a grid of $200$ values in $[0.5,1.5]$, densely sampled near one, we rescale the base to $\Gamma=\rho\Gamma_{\mathrm{base}}$ and record the first iteration at which $\|w_t\|_q\le10^{-3}$, with computational cap $T_{\max}=10{,}000$. The curve reports the median and interquartile band across the $100$ trajectories. Only $\rho<1$ values are evaluated for hitting times and plotted; the vertical display is limited to $1{,}000$. The same Perron weights are used across rescalings of each base.

\paragraph{Panel~\ref{fig:trajectories} (trajectories).} With seed $20260439$, each sparse baseline at $\rho\in\{0.50,0.99,1.01,1.50\}$ uses $100$ independent draws of a matrix and of vectors $b,w_0$ whose coordinates are Gaussian with standard deviation $1/\sqrt{k}$. We simulate $w_{t+1}=b+\Gamma w_t$ for $200$ steps and compute $w^\star=(I-\Gamma)^{-1}b$. The non-normal instance uses a fixed dense upper-triangular $\Gamma=0.5I+sU$, where $U$ has ones strictly above the diagonal and $s$ is chosen so that $\|\Gamma\|_2/\rho(\Gamma)=10$. It uses $b=0$ and $100$ Gaussian initial states normalized to Euclidean unit norm. Each curve reports the mean of $\log_{10}(\|w_t-w^\star\|_2/\|w_0-w^\star\|_2+10^{-12})$, with pointwise interquartile bands. The horizontal zero line marks the initial normalized distance.

Since the experiments are synthetic simulations, they can be reproduced with standard CPU resources.

\clearpage
\section{Approximate and Empirical Retraining}
\label{app:approximate-retraining}

\begin{proposition}[Robustness to retraining errors]
\label{prop:approximate-retraining}
Under the conditions of Theorem~\ref{thm:contraction}, let $q,\alpha$ be its weights and contraction factor, and let $w^\star$ be the unique equilibrium. Suppose the deployed iterates satisfy $\|w_{t+1}-B(w_t)\|_q\le\eta_t$ with $\eta_t\ge0$. Then, for every $t\ge0$,
\[
\|w_t-w^\star\|_q\le\alpha^t\|w_0-w^\star\|_q+
\sum_{s=0}^{t-1}\alpha^{t-1-s}\eta_s.
\]
If $\sup_t\eta_t\le\bar\eta$, then $\limsup_{t\to\infty}\|w_t-w^\star\|_q\le\bar\eta/(1-\alpha)$; if $\eta_t\to0$, the iterates converge to $w^\star$.

In particular, suppose agent $i$ retrains to an exact minimizer $w_{i,t+1}$ of a differentiable empirical risk $\widehat F_{i,t}$ on $\mathbb{R}^{d_i}$. On any event where
\[
\sup_{u_i\in\mathbb{R}^{d_i}}
\|\nabla\widehat F_{i,t}(u_i)-\nabla_{u_i}F_i(u_i;w_t)\|\le g_{i,t}
\quad\text{for every }i,t,
\]
the same conclusions hold with $\eta_t=\max_i g_{i,t}/(\mu_iq_i)$.
\end{proposition}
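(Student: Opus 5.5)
The plan is to derive the recursion stated just before Appendix~\ref{app:approximate-retraining} and then unroll it. For each $t$, the triangle inequality, the fixed-point identity $w^\star=B(w^\star)$ from Corollary~\ref{cor:convergence}, and the contraction of Theorem~\ref{thm:contraction} give
\[
\|w_{t+1}-w^\star\|_q\le\|w_{t+1}-B(w_t)\|_q+\|B(w_t)-B(w^\star)\|_q\le\eta_t+\alpha\|w_t-w^\star\|_q .
\]
Induction on $t$ then yields the displayed bound. The base case $t=0$ is trivial because the sum is empty. In the inductive step, multiplying the hypothesis by $\alpha$ and adding $\eta_t$ reproduces the formula at $t+1$.

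For the asymptotic claims, first take $\eta_s\le\bar\eta$ for all $s$. Then the sum is at most $\bar\eta\sum_{m\ge0}\alpha^m=\bar\eta/(1-\alpha)$, and $\alpha^t\|w_0-w^\star\|_q\to0$, which gives the $\limsup$ bound.

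If instead $\eta_t\to0$, I would split the convolution sum at $s=t/2$:
\begin{itemize}
\item for $s<t/2$, the terms are bounded by $(\sup_s\eta_s)\,\alpha^{t/2}/(1-\alpha)\to0$;
\item for $s\ge t/2$, the terms are bounded by $\sup_{s\ge t/2}\eta_s/(1-\alpha)\to0$.
\end{itemize}
A convergent sequence is bounded, so $\sup_s\eta_s$ is finite and this is a standard Toeplitz-type argument. Hence $w_t\to w^\star$.

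For the empirical part, the key step is to convert the uniform gradient error into a bound on the distance between minimizers. Write $u=w_{i,t+1}$ and $u'=B_i(w_t)$. Then $\nabla\widehat F_{i,t}(u)=0$ and $\nabla_{u_i}F_i(u';w_t)=0$. Following the proof of Theorem~\ref{thm:br-sensitivity}, $\mu_i$-strong convexity of $F_i(\cdot;w_t)$ gives
\[
\mu_i\|u-u'\|\le\|\nabla_{u_i}F_i(u;w_t)-\nabla_{u_i}F_i(u';w_t)\|=\|\nabla_{u_i}F_i(u;w_t)-\nabla\widehat F_{i,t}(u)\|\le g_{i,t}.
\]
Note that strong convexity is used only for the population risk, so $\widehat F_{i,t}$ needs no convexity; it is only required to have an exact minimizer, which is the hypothesis. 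Dividing by $\mu_iq_i$ and maximizing over $i$ gives $\|w_{t+1}-B(w_t)\|_q\le\max_i g_{i,t}/(\mu_iq_i)=\eta_t$ on the stated event, so the first part applies.

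I expect no serious obstacle. The two points needing care are the strong-convexity inequality $\mu\|u-u'\|\le\|\nabla f(u)-\nabla f(u')\|$, which follows from strong monotonicity of the gradient plus Cauchy--Schwarz, and keeping the $\eta_t\to0$ argument rigorous.
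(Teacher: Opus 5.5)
Your proposal is correct and follows the paper's proof essentially step for step: it unrolls the same triangle-inequality recursion $\|w_{t+1}-w^\star\|_q\le\eta_t+\alpha\|w_t-w^\star\|_q$, uses the same geometric-sum bound for bounded errors, and for the empirical part uses the same strong-convexity argument comparing $\nabla_{u_i}F_i(w_{i,t+1};w_t)$ with $\nabla\widehat F_{i,t}(w_{i,t+1})=0$. The only difference is cosmetic: you split the convolution sum at $s=t/2$, whereas the paper splits at a fixed index $S$ and then lets $S\to\infty$; strictly, your prefix bound should read $\alpha^{\lfloor t/2\rfloor}/(1-\alpha)$ rather than $\alpha^{t/2}/(1-\alpha)$, which does not affect the limit.
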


\begin{proof}
The triangle inequality and Theorem~\ref{thm:contraction} give
\[
\|w_{t+1}-w^\star\|_q\le
\|w_{t+1}-B(w_t)\|_q+\|B(w_t)-B(w^\star)\|_q
\le\eta_t+\alpha\|w_t-w^\star\|_q.
\]
Iterating this inequality proves the bound. For bounded errors, the geometric sum is at most $\bar\eta/(1-\alpha)$. For vanishing errors, split the sum into a finite prefix, which vanishes as $t\to\infty$, and a tail bounded by $\sup_{s\ge S}\eta_s/(1-\alpha)$, then let $S\to\infty$.

For empirical retraining, first-order optimality gives $\nabla\widehat F_{i,t}(w_{i,t+1})=0$. Strong convexity of $F_i(\cdot;w_t)$ and $\nabla_{u_i}F_i(B_i(w_t);w_t)=0$ imply
\[
\mu_i\|w_{i,t+1}-B_i(w_t)\|
\le\|\nabla_{u_i}F_i(w_{i,t+1};w_t)\|
\le g_{i,t}.
\]
Dividing by $\mu_iq_i$ and maximizing over agents gives the claimed $\eta_t$.
\end{proof}

This is a conditional stability statement: obtaining probabilities or sample-size rates for the gradient-error event requires additional statistical assumptions on the losses and samples.

\newpage
\section*{NeurIPS Paper Checklist}

\begin{enumerate}

\item {\bf Claims}
    \item[] Question: Do the main claims made in the abstract and introduction accurately reflect the paper's contributions and scope?
    \item[] Answer: \answerYes{}
    \item[] Justification: The abstract and introduction state the main theoretical claims: the reflexivity matrix framework, the subcritical contraction result, and the supercritical lower bound. The scope is qualified through the stated assumptions, including strong convexity, exact best-response retraining, and decentralized local information.
    \item[] Guidelines:
    \begin{itemize}
        \item The answer \answerNA{} means that the abstract and introduction do not include the claims made in the paper.
        \item The abstract and/or introduction should clearly state the claims made, including the contributions made in the paper and important assumptions and limitations. A \answerNo{} or \answerNA{} answer to this question will not be perceived well by the reviewers. 
        \item The claims made should match theoretical and experimental results, and reflect how much the results can be expected to generalize to other settings. 
        \item It is fine to include aspirational goals as motivation as long as it is clear that these goals are not attained by the paper. 
    \end{itemize}

\item {\bf Limitations}
    \item[] Question: Does the paper discuss the limitations of the work performed by the authors?
    \item[] Answer: \answerYes{}
    \item[] Justification: The paper discusses limitations in Section 7.
    \item[] Guidelines:
    \begin{itemize}
        \item The answer \answerNA{} means that the paper has no limitation while the answer \answerNo{} means that the paper has limitations, but those are not discussed in the paper. 
        \item The authors are encouraged to create a separate ``Limitations'' section in their paper.
        \item The paper should point out any strong assumptions and how robust the results are to violations of these assumptions (e.g., independence assumptions, noiseless settings, model well-specification, asymptotic approximations only holding locally). The authors should reflect on how these assumptions might be violated in practice and what the implications would be.
        \item The authors should reflect on the scope of the claims made, e.g., if the approach was only tested on a few datasets or with a few runs. In general, empirical results often depend on implicit assumptions, which should be articulated.
        \item The authors should reflect on the factors that influence the performance of the approach. For example, a facial recognition algorithm may perform poorly when image resolution is low or images are taken in low lighting. Or a speech-to-text system might not be used reliably to provide closed captions for online lectures because it fails to handle technical jargon.
        \item The authors should discuss the computational efficiency of the proposed algorithms and how they scale with dataset size.
        \item If applicable, the authors should discuss possible limitations of their approach to address problems of privacy and fairness.
        \item While the authors might fear that complete honesty about limitations might be used by reviewers as grounds for rejection, a worse outcome might be that reviewers discover limitations that aren't acknowledged in the paper. The authors should use their best judgment and recognize that individual actions in favor of transparency play an important role in developing norms that preserve the integrity of the community. Reviewers will be specifically instructed to not penalize honesty concerning limitations.
    \end{itemize}

\item {\bf Theory assumptions and proofs}
    \item[] Question: For each theoretical result, does the paper provide the full set of assumptions and a complete (and correct) proof?
    \item[] Answer: \answerYes{}
    \item[] Justification: The theoretical results state their assumptions explicitly, including strong convexity, gradient sensitivity, unconstrained domains, smoothness, and the decentralized information model. Proofs are provided in the main text for the subcritical results, and the full proof of the lower bound is provided in Appendix A.
    \item[] Guidelines:
    \begin{itemize}
        \item The answer \answerNA{} means that the paper does not include theoretical results. 
        \item All the theorems, formulas, and proofs in the paper should be numbered and cross-referenced.
        \item All assumptions should be clearly stated or referenced in the statement of any theorems.
        \item The proofs can either appear in the main paper or the supplemental material, but if they appear in the supplemental material, the authors are encouraged to provide a short proof sketch to provide intuition. 
        \item Inversely, any informal proof provided in the core of the paper should be complemented by formal proofs provided in appendix or supplemental material.
        \item Theorems and Lemmas that the proof relies upon should be properly referenced. 
    \end{itemize}

    \item {\bf Experimental result reproducibility}
    \item[] Question: Does the paper fully disclose all the information needed to reproduce the main experimental results of the paper to the extent that it affects the main claims and/or conclusions of the paper (regardless of whether the code and data are provided or not)?
    \item[] Answer: \answerYes{}
    \item[] Justification: The simulation setup details are described in Appendix~\ref{app:simulations}. The simulation code, seeds, dependencies, and reproduction commands are provided in the supplementary material.
    \item[] Guidelines:
    \begin{itemize}
        \item The answer \answerNA{} means that the paper does not include experiments.
        \item If the paper includes experiments, a \answerNo{} answer to this question will not be perceived well by the reviewers: Making the paper reproducible is important, regardless of whether the code and data are provided or not.
        \item If the contribution is a dataset and\slash or model, the authors should describe the steps taken to make their results reproducible or verifiable. 
        \item Depending on the contribution, reproducibility can be accomplished in various ways. For example, if the contribution is a novel architecture, describing the architecture fully might suffice, or if the contribution is a specific model and empirical evaluation, it may be necessary to either make it possible for others to replicate the model with the same dataset, or provide access to the model. In general. releasing code and data is often one good way to accomplish this, but reproducibility can also be provided via detailed instructions for how to replicate the results, access to a hosted model (e.g., in the case of a large language model), releasing of a model checkpoint, or other means that are appropriate to the research performed.
        \item While NeurIPS does not require releasing code, the conference does require all submissions to provide some reasonable avenue for reproducibility, which may depend on the nature of the contribution. For example
        \begin{enumerate}
            \item If the contribution is primarily a new algorithm, the paper should make it clear how to reproduce that algorithm.
            \item If the contribution is primarily a new model architecture, the paper should describe the architecture clearly and fully.
            \item If the contribution is a new model (e.g., a large language model), then there should either be a way to access this model for reproducing the results or a way to reproduce the model (e.g., with an open-source dataset or instructions for how to construct the dataset).
            \item We recognize that reproducibility may be tricky in some cases, in which case authors are welcome to describe the particular way they provide for reproducibility. In the case of closed-source models, it may be that access to the model is limited in some way (e.g., to registered users), but it should be possible for other researchers to have some path to reproducing or verifying the results.
        \end{enumerate}
    \end{itemize}

\item {\bf Open access to data and code}
    \item[] Question: Does the paper provide open access to the data and code, with sufficient instructions to faithfully reproduce the main experimental results, as described in supplemental material?
    \item[] Answer: \answerYes{}
    \item[] Justification: The simulations use synthetic matrices and do not rely on external datasets. The supplementary material includes the simulation code, seeds, dependencies, and instructions to reproduce Figure~\ref{fig:threshold}.
    \item[] Guidelines:
    \begin{itemize}
        \item The answer \answerNA{} means that paper does not include experiments requiring code.
        \item Please see the NeurIPS code and data submission guidelines (\url{https://neurips.cc/public/guides/CodeSubmissionPolicy}) for more details.
        \item While we encourage the release of code and data, we understand that this might not be possible, so \answerNo{} is an acceptable answer. Papers cannot be rejected simply for not including code, unless this is central to the contribution (e.g., for a new open-source benchmark).
        \item The instructions should contain the exact command and environment needed to run to reproduce the results. See the NeurIPS code and data submission guidelines (\url{https://neurips.cc/public/guides/CodeSubmissionPolicy}) for more details.
        \item The authors should provide instructions on data access and preparation, including how to access the raw data, preprocessed data, intermediate data, and generated data, etc.
        \item The authors should provide scripts to reproduce all experimental results for the new proposed method and baselines. If only a subset of experiments are reproducible, they should state which ones are omitted from the script and why.
        \item At submission time, to preserve anonymity, the authors should release anonymized versions (if applicable).
        \item Providing as much information as possible in supplemental material (appended to the paper) is recommended, but including URLs to data and code is permitted.
    \end{itemize}

\item {\bf Experimental setting/details}
    \item[] Question: Does the paper specify all the training and test details (e.g., data splits, hyperparameters, how they were chosen, type of optimizer) necessary to understand the results?
    \item[] Answer: \answerYes{}
    \item[] Justification: Details needed to understand the results of the simulations are given in the main text. Appendix B specifies the full details for Figure 1. The code is also provided.
    \item[] Guidelines:
    \begin{itemize}
        \item The answer \answerNA{} means that the paper does not include experiments.
        \item The experimental setting should be presented in the core of the paper to a level of detail that is necessary to appreciate the results and make sense of them.
        \item The full details can be provided either with the code, in appendix, or as supplemental material.
    \end{itemize}

\item {\bf Experiment statistical significance}
    \item[] Question: Does the paper report error bars suitably and correctly defined or other appropriate information about the statistical significance of the experiments?
    \item[] Answer: \answerYes{}
    \item[] Justification: Panel~\ref{fig:phase} reports medians with interquartile bands across base matrices and initializations. Panel~\ref{fig:trajectories} reports mean log-distances with pointwise interquartile bands across the draws described in Appendix~\ref{app:simulations}.
    \item[] Guidelines:
    \begin{itemize}
        \item The answer \answerNA{} means that the paper does not include experiments.
        \item The authors should answer \answerYes{} if the results are accompanied by error bars, confidence intervals, or statistical significance tests, at least for the experiments that support the main claims of the paper.
        \item The factors of variability that the error bars are capturing should be clearly stated (for example, train/test split, initialization, random drawing of some parameter, or overall run with given experimental conditions).
        \item The method for calculating the error bars should be explained (closed form formula, call to a library function, bootstrap, etc.)
        \item The assumptions made should be given (e.g., Normally distributed errors).
        \item It should be clear whether the error bar is the standard deviation or the standard error of the mean.
        \item It is OK to report 1-sigma error bars, but one should state it. The authors should preferably report a 2-sigma error bar than state that they have a 96\% CI, if the hypothesis of Normality of errors is not verified.
        \item For asymmetric distributions, the authors should be careful not to show in tables or figures symmetric error bars that would yield results that are out of range (e.g., negative error rates).
        \item If error bars are reported in tables or plots, the authors should explain in the text how they were calculated and reference the corresponding figures or tables in the text.
    \end{itemize}

\item {\bf Experiments compute resources}
    \item[] Question: For each experiment, does the paper provide sufficient information on the computer resources (type of compute workers, memory, time of execution) needed to reproduce the experiments?
    \item[] Answer: \answerNo{}
    \item[] Justification: The paper does not currently report compute resources for the simulations. Since the experiments are synthetic simulations rather than large-scale training runs, they can be reproduced with standard CPU resources; this is stated in Appendix B.
    \item[] Guidelines:
    \begin{itemize}
        \item The answer \answerNA{} means that the paper does not include experiments.
        \item The paper should indicate the type of compute workers CPU or GPU, internal cluster, or cloud provider, including relevant memory and storage.
        \item The paper should provide the amount of compute required for each of the individual experimental runs as well as estimate the total compute. 
        \item The paper should disclose whether the full research project required more compute than the experiments reported in the paper (e.g., preliminary or failed experiments that didn't make it into the paper). 
    \end{itemize}
    
\item {\bf Code of ethics}
    \item[] Question: Does the research conducted in the paper conform, in every respect, with the NeurIPS Code of Ethics \url{https://neurips.cc/public/EthicsGuidelines}?
    \item[] Answer: \answerYes{}
    \item[] Justification: The work is theoretical and uses synthetic simulations only. It does not involve human subjects, private data, scraped data, deployed systems, or high-risk model release, and the authors have reviewed the NeurIPS Code of Ethics.
    \item[] Guidelines:
    \begin{itemize}
        \item The answer \answerNA{} means that the authors have not reviewed the NeurIPS Code of Ethics.
        \item If the authors answer \answerNo, they should explain the special circumstances that require a deviation from the Code of Ethics.
        \item The authors should make sure to preserve anonymity (e.g., if there is a special consideration due to laws or regulations in their jurisdiction).
    \end{itemize}

\item {\bf Broader impacts}
    \item[] Question: Does the paper discuss both potential positive societal impacts and negative societal impacts of the work performed?
    \item[] Answer: \answerNo{}
    \item[] Justification: The paper is foundational and is not tied to a deployed system. We do not identify a clear direct path to negative applications, and the paper does not include a dedicated discussion of positive and negative societal impacts.
    \item[] Guidelines:
    \begin{itemize}
        \item The answer \answerNA{} means that there is no societal impact of the work performed.
        \item If the authors answer \answerNA{} or \answerNo, they should explain why their work has no societal impact or why the paper does not address societal impact.
        \item Examples of negative societal impacts include potential malicious or unintended uses (e.g., disinformation, generating fake profiles, surveillance), fairness considerations (e.g., deployment of technologies that could make decisions that unfairly impact specific groups), privacy considerations, and security considerations.
        \item The conference expects that many papers will be foundational research and not tied to particular applications, let alone deployments. However, if there is a direct path to any negative applications, the authors should point it out. For example, it is legitimate to point out that an improvement in the quality of generative models could be used to generate Deepfakes for disinformation. On the other hand, it is not needed to point out that a generic algorithm for optimizing neural networks could enable people to train models that generate Deepfakes faster.
        \item The authors should consider possible harms that could arise when the technology is being used as intended and functioning correctly, harms that could arise when the technology is being used as intended but gives incorrect results, and harms following from (intentional or unintentional) misuse of the technology.
        \item If there are negative societal impacts, the authors could also discuss possible mitigation strategies (e.g., gated release of models, providing defenses in addition to attacks, mechanisms for monitoring misuse, mechanisms to monitor how a system learns from feedback over time, improving the efficiency and accessibility of ML).
    \end{itemize}
    
\item {\bf Safeguards}
    \item[] Question: Does the paper describe safeguards that have been put in place for responsible release of data or models that have a high risk for misuse (e.g., pre-trained language models, image generators, or scraped datasets)?
    \item[] Answer: \answerNA{}
    \item[] Justification: The paper does not release artifacts with substantial misuse risk. The only released artifacts are synthetic simulation code.
    \item[] Guidelines:
    \begin{itemize}
        \item The answer \answerNA{} means that the paper poses no such risks.
        \item Released models that have a high risk for misuse or dual-use should be released with necessary safeguards to allow for controlled use of the model, for example by requiring that users adhere to usage guidelines or restrictions to access the model or implementing safety filters. 
        \item Datasets that have been scraped from the Internet could pose safety risks. The authors should describe how they avoided releasing unsafe images.
        \item We recognize that providing effective safeguards is challenging, and many papers do not require this, but we encourage authors to take this into account and make a best faith effort.
    \end{itemize}

\item {\bf Licenses for existing assets}
    \item[] Question: Are the creators or original owners of assets (e.g., code, data, models), used in the paper, properly credited and are the license and terms of use explicitly mentioned and properly respected?
    \item[] Answer: \answerNA{}
    \item[] Justification: The paper does not use existing datasets, pretrained models, or third-party code assets as part of the research contribution. The work is theoretical and uses synthetic simulations.
    \item[] Guidelines:
    \begin{itemize}
        \item The answer \answerNA{} means that the paper does not use existing assets.
        \item The authors should cite the original paper that produced the code package or dataset.
        \item The authors should state which version of the asset is used and, if possible, include a URL.
        \item The name of the license (e.g., CC-BY 4.0) should be included for each asset.
        \item For scraped data from a particular source (e.g., website), the copyright and terms of service of that source should be provided.
        \item If assets are released, the license, copyright information, and terms of use in the package should be provided. For popular datasets, \url{paperswithcode.com/datasets} has curated licenses for some datasets. Their licensing guide can help determine the license of a dataset.
        \item For existing datasets that are re-packaged, both the original license and the license of the derived asset (if it has changed) should be provided.
        \item If this information is not available online, the authors are encouraged to reach out to the asset's creators.
    \end{itemize}

\item {\bf New assets}
    \item[] Question: Are new assets introduced in the paper well documented and is the documentation provided alongside the assets?
    \item[] Answer: \answerYes{}
    \item[] Justification: The paper releases synthetic simulation code and seeds. These assets are accompanied by documentation describing the simulation procedure, dependencies, commands needed to reproduce the figures, and an appropriate license.
    \item[] Guidelines:
    \begin{itemize}
        \item The answer \answerNA{} means that the paper does not release new assets.
        \item Researchers should communicate the details of the dataset\slash code\slash model as part of their submissions via structured templates. This includes details about training, license, limitations, etc. 
        \item The paper should discuss whether and how consent was obtained from people whose asset is used.
        \item At submission time, remember to anonymize your assets (if applicable). You can either create an anonymized URL or include an anonymized zip file.
    \end{itemize}

\item {\bf Crowdsourcing and research with human subjects}
    \item[] Question: For crowdsourcing experiments and research with human subjects, does the paper include the full text of instructions given to participants and screenshots, if applicable, as well as details about compensation (if any)? 
    \item[] Answer: \answerNA{}
    \item[] Justification: The paper does not involve crowdsourcing or research with human subjects.
    \item[] Guidelines:
    \begin{itemize}
        \item The answer \answerNA{} means that the paper does not involve crowdsourcing nor research with human subjects.
        \item Including this information in the supplemental material is fine, but if the main contribution of the paper involves human subjects, then as much detail as possible should be included in the main paper. 
        \item According to the NeurIPS Code of Ethics, workers involved in data collection, curation, or other labor should be paid at least the minimum wage in the country of the data collector. 
    \end{itemize}

\item {\bf Institutional review board (IRB) approvals or equivalent for research with human subjects}
    \item[] Question: Does the paper describe potential risks incurred by study participants, whether such risks were disclosed to the subjects, and whether Institutional Review Board (IRB) approvals (or an equivalent approval/review based on the requirements of your country or institution) were obtained?
    \item[] Answer: \answerNA{}
    \item[] Justification: The paper does not involve human subjects research, so IRB approval or equivalent review is not applicable.
    \item[] Guidelines:
    \begin{itemize}
        \item The answer \answerNA{} means that the paper does not involve crowdsourcing nor research with human subjects.
        \item Depending on the country in which research is conducted, IRB approval (or equivalent) may be required for any human subjects research. If you obtained IRB approval, you should clearly state this in the paper. 
        \item We recognize that the procedures for this may vary significantly between institutions and locations, and we expect authors to adhere to the NeurIPS Code of Ethics and the guidelines for their institution. 
        \item For initial submissions, do not include any information that would break anonymity (if applicable), such as the institution conducting the review.
    \end{itemize}

\item {\bf Declaration of LLM usage}
    \item[] Question: Does the paper describe the usage of LLMs if it is an important, original, or non-standard component of the core methods in this research? Note that if the LLM is used only for writing, editing, or formatting purposes and does \emph{not} impact the core methodology, scientific rigor, or originality of the research, declaration is not required.
    \item[] Answer: \answerNA{}
    \item[] Justification: The core methodology, theoretical results, simulations, and proofs do not use LLMs as an important, original, or non-standard research component. Any LLM use was limited to writing, editing, or formatting assistance and does not affect the scientific methodology.
    \item[] Guidelines:
    \begin{itemize}
        \item The answer \answerNA{} means that the core method development in this research does not involve LLMs as any important, original, or non-standard components.
        \item Please refer to our LLM policy in the NeurIPS handbook for what should or should not be described.
    \end{itemize}

\end{enumerate}

\end{document}